\newcommand{\cost}{\texttt{cost}}
\newcommand{\score}{\texttt{score}}
\newcommand{\ascore}{\texttt{ascore}}
\newcommand{\nscore}{\texttt{nscore}}
\newcommand{\tree}{\pazocal{T}}
\newcommand{\node}{Q}
\newcommand{\LP}{\mathsf{LP}}
\newcommand{\children}{\texttt{children}}
\newcommand{\actions}{\texttt{actions}}
\newcommand{\fathom}{\texttt{fathom}}
\title{Improved Sample Complexity Bounds for Branch-and-Cut}
\author{Maria-Florina Balcan\thanks{School of Computer Science, Carnegie Mellon University. \texttt{ninamf@cs.cmu.edu}} \and Siddharth Prasad\thanks{Computer Science Department, Carnegie Mellon University. \texttt{sprasad2@cs.cmu.edu}} \and Tuomas Sandholm\thanks{Computer Science Department, Carnegie Mellon University, Optimized Markets, Inc., Strategic Machine, Inc., Strategy Robot, Inc. \texttt{sandholm@cs.cmu.edu}} \and Ellen Vitercik\thanks{Department of Electrical Engineering and Computer Sciences, UC Berkeley. \texttt{vitercik@berkeley.edu}}}
\begin{document}

\maketitle

\begin{abstract}
Branch-and-cut is the most widely used algorithm for solving integer programs, employed by commercial solvers like CPLEX and Gurobi. Branch-and-cut has a wide variety of tunable parameters that have a huge impact on the size of the search tree that it builds, but are challenging to tune by hand. An increasingly popular approach is to use machine learning to tune these parameters: using a \emph{training set} of integer programs from the application domain at hand, the goal is to find a configuration with strong predicted performance on future, unseen integer programs from the same domain. If the training set is too small, a configuration may have good performance over the training set but poor performance on future integer programs. In this paper, we prove \emph{sample complexity guarantees} for this procedure, which bound how large the training set should be to ensure that for any configuration, its average performance over the training set is close to its expected future performance. Our guarantees apply to parameters that control the most important aspects of branch-and-cut: node selection, branching constraint selection, and cutting plane selection, and are sharper and more general than those found in prior research~\citep{Balcan18:Learning,Balcan21:Sample}.
\end{abstract}

\section{Introduction}
Branch-and-cut (B\&C) is a powerful algorithmic paradigm that is the backbone of all modern integer programming (IP) solvers. The main components of B\&C can be tuned and tweaked in a myriad of ways. The fastest commercial IP solvers like CPLEX and Gurobi employ an array of heuristics to make decisions at every stage of B\&C to reduce the solving time as much as possible, and give the user freedom to tune the multitude of parameters influencing the search through the space of feasible solutions. However, tuning the parameters that control B\&C in a principled way is an inexact science with little to no formal mathematical guidelines. A rapidly growing line of work studies machine-learning approaches to speeding up the various aspects of B\&C---in particular investigating whether high-performing B\&C parameter configurations can be learned from a \emph{training set} of typical IPs from the particular application at hand~\cite{Alvarez17:Machine,Horvitz01:Bayesian,Sandholm13:Very-Large-Scale,Xu08:SATzilla,Hutter09:Paramils,Leyton-Brown09:Empirical,Kadioglu10:ISAC,Xu11:Hydra-MIP,Khalil16:Learning}. Complementing the substantial number of experimental approaches using machine learning for B\&C, a recent generalization theory has developed in parallel that aims to provide a rigorous theoretical foundation for how well any B\&C configuration learned from training IP data will perform on new unseen IPs~\cite{Balcan18:Learning,Balcan21:Sample}.
In particular, this line of theoretical research provides \emph{sample complexity guarantees} that bound how large the training set should be to ensure that \emph{no matter how the parameters are configured} (i.e., using any approach from prior research), the average performance of branch-and-cut over the training set is close to its expected future performance. Sample complexity bounds are important because with too small a training set, learning is impossible: a configuration may have strong average performance over the training set but terrible expected performance on future IPs. If the training set is too small, then no matter how the parameters are tuned, the resulting configuration will not have reliably better performance than any default configuration. State-of-the-art parameter tuning methods have historically come without any provable guarantees, and our results fill in that gap for a wide array of tunable B\&C parameters.
 In this paper, we expand and improve upon the existing theory to develop a wider and sharper handle on the learnability of the key components of B\&C. 

\subsection{Summary of main contributions}\label{sec:contrib}

Our main contribution is a formalization of a general model of tree search, presented in Section~\ref{sec:model}, that allows us to improve and generalize prior results on the sample complexity of tuning B\&C. In this model, the algorithm repeatedly chooses a leaf node of the search tree, performs a series of actions (for example, a cutting plane to apply and a constraint to branch on), and adds children to that leaf in the search tree. The algorithm will also fathom nodes when applicable. The node and action selection are governed by \emph{scoring rules}, which assign a real-valued score to each node and possible action. For example, a node-selection scoring rule might equal the objective value of the node's LP relaxation. We focus on general tree search with \emph{path-wise} scoring rules. At a high level, a score of a node or action is path-wise if its value only depends on information contained along the path between the root and that node, as is often the case in B\&C. Many commonly used scoring rules are path-wise including the  efficacy~\cite{Balas96:Mixed}, objective parallelism~\cite{Achterberg07:Constraint}, directed cutoff distance~\cite{Gamrath20:SCIP}, and integral support~\cite{Wesselmann12:Implementing} scoring rules, all used for cut selection by the leading open-souce solver SCIP~\cite{Gamrath20:SCIP}; the best-bound scoring rule for node selection; and the linear, product, and most-fractional scoring rules for variable selection using strong branching~\cite{Achterberg07:Constraint}. In Section~\ref{sec:branch_and_cut}, we show how this general model of tree search captures a wide array of B\&C components, including node selection, general branching constraint selection, and cutting plane selection, simultaneously. We also provide experimental evidence that, in the case of cutting plane selection, the data-dependent tuning suggested by our model can lead to dramatic reductions in the number of nodes expanded by B\&C.

In Section~\ref{sec:tree}, we prove our main structural result: for any IP, the tree search parameter space can be partitioned into a finite number of regions such that in any one region, the resulting search tree is fixed. This is in spite of the fact that the B\&C search tree can be an extremely unstable function of its parameters, with minuscule changes leading to exponentially better or worse performance~\cite{Balcan18:Learning,Balcan21:Sample}. By analyzing the complexity of this partition, we prove our sample complexity bound. In particular, we relate the complexity of the partition to the \emph{pseudo-dimension} of the set of functions that measure the performance of B\&C as a function of the input IP. Pseudo-dimension (defined in Section~\ref{sec:tree}) is a combinatorial notion from machine learning theory that measures the \emph{intrinsic complexity} of a set of functions. At a high level, it measures how well a set of functions are able match complex patterns. Classic results from learning theory then allow us to translate our pseudo-dimension bound into a sample complexity guarantee~\cite{Anthony09:Neural}, capturing the intuition that the more complex patterns one can fit (i.e., the larger the pseudo-dimension is), the more samples needed to generalize. The sample complexity bound grows linearly with the pseudo-dimension, so ideally, the pseudo-dimension will be polynomial in the size of the problem.


We show that the pseudo-dimension is only polynomial in the depth of the tree (which is, for example, at most the number of variables in the case of binary integer programming). By contrast, we might na\"ively expect the pseudo-dimension to grow linearly with the number of arithmetic operations required to compute the B\&C tree (as in Theorem 8.4 by Anthony and Bartlett~\cite{Anthony09:Neural}), which is exponential in the depth of the tree.
In fact, our bound is exponentially smaller than the pseudo-dimension bound of prior research by Balcan et al.~\cite{Balcan21:Sample}, which grows linearly with the total number of nodes in the tree. Their results apply to any type of scoring rule, path-wise or otherwise. By taking advantage of the path-wise structure, we are able to reason inductively over the depth of the tree, leading to our exponentially improved bound.
Our results recover those of Balcan et al.~\cite{Balcan18:Learning}, who only studied path-wise scoring rules for single-variable selection for branching. In contrast, we are able to handle many more of the critical components of tree search: node selection, general branching constraint selection, and cutting plane selection.

\subsection{Additional related research}

A growing body of research has studied how machine learning can be used to speed up the time it takes to solve integer programs, primarily from an empirical perspective, whereas we study this problem from a theoretical perspective. This line of research has included general parameter tuning procedures~\cite{Hutter09:Paramils,Kadioglu10:ISAC,Hutter11:Sequential,Sandholm13:Very-Large-Scale}, which are not restricted to any one aspect of B\&C. Researchers have also honed in on specific aspects of tree search and worked towards improving those using machine learning. These include variable selection~\cite{Khalil16:Learning,Alvarez17:Machine,DiLiberto16:Dash, Balcan18:Learning,Gasse19:Exact,Gupta20:Hybrid}, general branching constraint selection~\cite{Yang20:Learning}, cut selection~\cite{Sandholm13:Very-Large-Scale, Tang20:Reinforcement, Huang21:Learning, Balcan21:Sample}, node selection~\cite{Sabharwal12:Guiding,He14:Learning}, and heuristic scheduling~\cite{Khalil17:Learning,Chmiela21:Learning}. Machine learning approaches to large neighborhood search have also been used to speed up solver runtimes~\cite{Song20:General}.

This paper contributes to a line of research that provides sample complexity guarantees for algorithm configuration, often by using structure exhibited by the algorithm's performance as a function of its parameters~\cite{Gupta17:PAC,Balcan17:Learning,Balcan18:Learning,Balcan21:How,Balcan21:Sample, Balcan20:Data}. This line of research has studied algorithms for clustering~\cite{Balcan17:Learning}, computational biology~\cite{Balcan21:How}, and integer programming~\cite{Balcan18:Learning,Balcan21:Sample}, among other computational problems. The main contribution of this paper is to provide a sharp yet  general analysis of the performance of tree search as a function of its parameters.

A related line of research provides algorithm configuration procedures with provable guarantees that are agnostic to the specific algorithm that is being configured~\cite{Kleinberg17:Efficiency,Weisz18:LeapsAndBounds} and are particularly well-suited for algorithms with a finite number of possible configurations (though they can be applied to algorithms with infinite parameter spaces by randomly sampling a finite set of configurations).

\section{Main tree search model}

In this section we present our general tree search model and situate it within the framework of sample complexity. Balcan et al.~\cite{Balcan21:Sample} studied the sample complexity of a much more general formulation of a tunable search algorithm without any inherent tree structure. Our formulation explicitly builds a tree. 

\subsection{General model of tree search}\label{sec:model}

Tree search starts with a root node. In each round of tree search, a leaf node $Q$ is selected. At this node, one of three things may occur: (1) $Q$ is fathomed, meaning it is never visited again, (2) some action is taken at $Q$, and then it is fathomed, or (3) some action is taken at $Q$, and then some number of children nodes of $Q$ are added to the tree. (For example, an action might represent a decision about which variable to branch on.) This process repeats until the tree has no unfathomed leaves. More formally, there are functions $\actions$, $\children$, and $\fathom$ prescribing how the search proceeds. Given a partial tree $\tree$ and a leaf $Q$ of $\tree$, $\actions(\tree, Q)$ outputs a set of actions available at $Q$. Given a partial tree $\tree$, a leaf $Q$ of $\tree$, and an action $A\in\actions(\tree, Q)$, $\fathom(\tree, \node, A)\in\{\texttt{true}, \texttt{false}\}$ is a Boolean function used to determine when to fathom a leaf $Q$ of $\tree$ given that action $A\in\actions(\tree, \node)\cup\{\texttt{None}\}$ was taken at $Q$, and $\children(\tree, Q, A)$ outputs a (potentially empty) list of nodes representing the children of $Q$ to be added to the search tree given that action $A$ was taken at $Q$.
Finally, $\nscore(\tree, Q)$ is a node-selection score that outputs a real-valued score for each leaf of $\tree$, and $\ascore(\tree, Q, A)$ is an action-selection score that outputs a real-valued score for each action $A\in\actions(\tree, Q)$. These scores are heuristics that are meant to indicate the quality of exploring a node or performing an action.

Many aspects of B\&C are governed by scoring rules~\cite{Achterberg07:Constraint}. For example, commonly used scoring rules for cutting plance selection include \emph{efficacy}~\cite{Balas96:Mixed}, which is the perpendicular distance from the current LP solution to the cutting plane; \emph{parallelism}~\cite{Achterberg07:Constraint}, which measures the angle between the objective and the normal vector to the cutting plane; and \emph{directed cutoff}~\cite{Gamrath20:SCIP}, which is the distance from the current LP solution to the cutting plane along the direction of the line segment connecting the LP solution to the current best incumbent integer solution
For node selection, under the commonly used best-first node selection policy, $\nscore(\tree, Q)$ equals the objective value of the LP relaxation of the IP represented by the node $Q$.
Finally, for variable selection, popular scoring rules include a maximum change in LP objective value after branching on the variable (where the maximum is taken over the two resulting children), the minimum change in the LP objective value, linear combinations of these two values, and the product of these two values~\cite{Achterberg07:Constraint}. Algorithm~\ref{alg:TS} is a formal description of tree search using these functions.

\begin{algorithm}[t]
	\caption{Tree search}\label{alg:TS}
	\begin{algorithmic}[1]
\Require Root node $\node$, depth limit $\Delta$
\State Initialize $\tree = \node$.
\While{$\tree$ contains an unfathomed leaf\label{step:while_begin}}
	\State Select a leaf $Q$ of $\tree$ that maximizes $\nscore(\tree, Q)$.\label{step:nsp}
	\If {$\texttt{depth}(\node) = \Delta$ or $\fathom(\tree, \node, \texttt{None})$} \label{step:depth_fathom}
	    \State Fathom $Q$.
	\Else
	    \State Select an action $A\in\actions(\tree, \node)$ that maximizes $\ascore(\tree,Q, A)$.\label{step:action}
	    \If {$\fathom(\tree, \node, A)$} \label{step:action_fathom}
	        \State Fathom $Q$.
	    \ElsIf {$\children(\tree, \node, A) = \emptyset$} \label{step:no_children}
	        \State Fathom $Q$.
	    \Else 
	        \State Add all nodes in $\children(\tree, \node, A)$ to $\tree$ as children of $Q$. \label{step:add_children}
	    \EndIf
    \EndIf
\EndWhile\label{step:while_end}
\end{algorithmic}
\end{algorithm}

The key condition that enables us to derive stronger sample complexity bounds compared to prior research is the notion of a \emph{path-wise} function, which was also used in prior research but only in the context of variable selection~\cite{Balcan18:Learning}.

\begin{definition}[Path-wise functions]
A function $f$ on tree-leaf pairs is path-wise if for all $\tree$ and $Q\in\tree$, $f(\tree, \node) = f(\tree_Q, \node)$, where $\tree_Q$ is the path from the root of $\tree$ to $Q$. A function $g$ on tree-leaf-action triples is path-wise if for all $A$, $f_A(\tree, Q) := g(\tree, Q, A)$ is path-wise.
\end{definition}

We assume that $\actions$, $\ascore$, $\nscore$ and $\children$ are path-wise, though $\fathom$ is not necessarily path-wise. 

Many commonly-used scoring rules are path-wise. For example, scoring rules are often functions of the LP relaxation of the IP represented by a given node, and these scoring rules are path-wise. Specific examples include the efficacy, objective parallelism, directed cutoff distance, and integral support scoring rules used for cut selection; the best-bound scoring rule for node selection; and the linear, product, and most-fractional scoring rules for variable selection using strong branching. A point of clarification: the pathwise assumption is with respect to the numerical scores assigned to actions/nodes. The actual act of, for example, node selection, can depend on the entire tree. For example, consider the best-bound node selection rule in branch-and-cut, which chooses the node with the best LP estimate. Here, the scoring rule, which is the LP objective value itself, is pathwise, but ultimately the node that is selected depends on the LP bounds at every unexplored node of the tree. This is fine for our analysis. Similarly, the decision to fathom a node based on LP bounds is a decision that depends on the entire tree built so far, which is also captured by our analysis.

No one scoring rule is optimal across all application domains, and prior research on variable selection has shown that it can be advantageous to adapt the scoring rule to the application domain at hand~\cite{Balcan18:Learning}.
To this end, Algorithm~\ref{alg:TS} can be tuned by two parameters $\mu\in [0,1]$ and $\lambda\in [0,1]$ that control action selection and node selection, respectively. Given two fixed path-wise action-selection scores $\ascore_1$ and $\ascore_2$, we define a new score by $$\ascore_{\mu}(\tree, Q) = \mu\cdot\ascore_1(\tree, Q) + (1-\mu)\cdot\ascore_2(\tree, Q).$$ Similarly, given two path-wise node-selection scores $\nscore_1$ and $\nscore_2$, we define  $$\nscore_{\lambda}(\tree, Q, A) = \lambda\cdot\nscore_1(\tree, Q, A) + (1-\lambda)\cdot\nscore_2(\tree, Q, A).$$ Then, if $\nscore_{\lambda}$ and $\ascore_{\mu}$ are used as the scores in Algorithm~\ref{alg:TS}, we can view the behavior of tree search as a function of $\mu$ and $\lambda$. The choice to use a convex combination of scores is not new: prior research has shown that this idea can lead to dramatic improvements in the case of single-variable branching~\cite{Balcan18:Learning}. Furthermore, the leading open source solver SCIP uses a hard-coded weighted sum of scoring rules to select cutting planes. More broadly, interpolating between two scores is a commonly-studied modeling choice in other machine learning topics such as clustering~\cite{Balcan17:Learning}.

Finally, we assume there exists $b, k\in\N$ such that $|\actions(\tree, \node)|\le b$ for any $\node\in\tree$, and $|\children(\tree, \node, A)|\le k$ for all $Q, A$.

\subsection{Problem formulation}\label{sec:learning}

We now define the notion of a \emph{sample complexity bound} more formally.
Let $\cQ$ denote the domain of possible input root nodes $Q$ to Algorithm~\ref{alg:TS} (for example, the set of all IPs with $n$ variables and $m$ constraints). As is common in prior research on algorithm configuration~\cite{Horvitz01:Bayesian,Sandholm13:Very-Large-Scale,Xu08:SATzilla,Hutter09:Paramils,Leyton-Brown09:Empirical,Kadioglu10:ISAC,Xu11:Hydra-MIP}, we assume there is some unknown distribution $\dist$ over $\cQ$. In the IP setting, $\dist$ could represent, for example, typical scheduling IP instances solved by an airline company. 
The \emph{sample complexity} of a class of real valued functions $\cF = \{f : \cQ\to\R\}$ is the minimum number of independent samples required from $\dist$ so that with high probability over the samples, the empirical value of $f$ on the samples is a good approximation of the expected value of $f$ over $\dist$, uniformly over all $f\in\cF$. Formally, given an error parameter $\varepsilon$ and confidence parameter $\delta$, the sample complexity $N_{\cF}(\varepsilon, \delta)$ is the minimum $N_0\in\N$ such that for any $N\ge N_0$, $${\displaystyle\Pr_{Q_1,\ldots, Q_N\sim\dist}\left(\sup_{f\in\cF}\left|\frac{1}{N}\sum_{i=1}^N f(Q_i) - \E_{Q\sim\dist}[f(Q)]\right|\le\varepsilon\right)\ge 1-\delta}$$ for all distributions $\dist$ supported on $\cQ$. Equivalently, our results bound the error $\varepsilon_{\cF}(N, \delta)$ between the empirical value of any $f\in\cF$ and its true expected value in terms of the number of training samples $N$ and the confidence parameter $\delta$. $N_{\cF}(\varepsilon,\delta)$ is the number of samples required to achieve a prescribed error bound $\varepsilon$, while $\varepsilon_{\cF}(N, \delta)$ provides an error bound for any number $N$ of samples at hand. We provide bounds on $N_{\cF}(\varepsilon,\delta)$ and $\varepsilon_{\cF}(N,\delta)$ in terms of a common learning-theoretic measure of intrinsic complexity of $\cF$ called \emph{pseudo-dimension}, which is detailed in Section~\ref{sec:tree}.

In the context of Algorithm~\ref{alg:TS}, we study families of \emph{tree-constant} cost functions. A cost function $\cost:\cQ\to\R$ is tree constant if $\cost(Q)$ only depends on the tree built by Algorithm~\ref{alg:TS} on input $Q$ (an example is tree size). Let $\cost_{\mu,\lambda}(Q)$ denote this cost when Algorithm~\ref{alg:TS} is run using the scores $\ascore_{\mu} = \mu\cdot\ascore_1 + (1-\mu)\cdot\ascore_2$ and $\nscore_{\lambda} = \lambda\cdot\nscore_1 + (1-\lambda)\cdot\nscore_2$. We study the sample complexity of $\cF = \{\cost_{\mu,\lambda} : \mu, \lambda \in [0,1]\}$. We emphasize that we primarily interpret tree-constant functions as proxies for run-time. In the context of integer programming, tree size is one such measure. A strength of these guarantees is that they apply no matter how the parameters are tuned: optimally or suboptimally, manually or automatically. For \emph{any} configuration, these guarantees bound the difference between average performance over the training set and expected future performance on unseen IPs.

\section{Generalization guarantees for tree search}\label{sec:tree}

In order to derive our sample complexity guarantees, we first prove a key structural property: the behavior of Algorithm~\ref{alg:TS} is piecewise constant as a function of the node-selection score parameter $\lambda$ and the action-selection score parameter $\mu$. We give a high-level outline of our approach. We first assume that the conditional checks $\fathom(\tree, Q, \cdot) = \texttt{true}$ (lines~\ref{step:depth_fathom} and~\ref{step:action_fathom}) are suppressed. Let $\cA'$ denote Algorithm~\ref{alg:TS} without these checks (so $\cA'$ fathoms a node if and only if the depth limit is reached or if the node has no children). The behavior of $\cA'$ as a function of $\mu$ and $\lambda$ can be shown to be piecewise constant using the same argument as in Claim 3.4 of Balcan et al.~\cite{Balcan18:Learning}. Given this, our first main technical contribution (Lemma~\ref{lemma:rooted}) is a generalization of Claim 3.5 of Balcan et al.~\cite{Balcan18:Learning} that relates the behavior of $\cA'$ to Algorithm~\ref{alg:TS}. The argument in Balcan et al.~\cite{Balcan18:Learning} is specific to branching, but
we are able to prove our result in a much more general setting. Our second main technical contribution (Lemma~\ref{lemma:main}) is to establish piecewise structure when the node-selection score is controlled by $\lambda\in [0,1]$. The main reason for this auxiliary step of analyzing $\cA'$ is due to the fact that $\fathom$ is \emph{not} necessarily a path-wise function, and can depend on the state of the entire tree.

\begin{lemma}\label{lemma:rooted}
Fix $\mu\in [0,1]$. Let $\tree$ and $\tree'$ be the trees built by Algorithm~\ref{alg:TS} and $\cA'$, respectively, using the action-selection score $\mu\cdot\ascore_1 + (1-\mu)\cdot\ascore_2$. Let $Q$ be any node in $\tree$, and let $\tree_Q$ be the path from the root of $\tree$ to $Q$. Then, $\tree_Q$ is a rooted subtree of $\tree'$, no matter what node selection policy is used.
\end{lemma}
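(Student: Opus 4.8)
The plan is to induct on depth along the path $\tree_Q$, exploiting the fact that every function governing the \emph{expansion} of a node---namely $\actions$, $\ascore_\mu := \mu\cdot\ascore_1 + (1-\mu)\cdot\ascore_2$, and $\children$---is path-wise, while the only extra pruning that Algorithm~\ref{alg:TS} performs relative to $\cA'$ comes from the (non-path-wise) $\fathom$ checks, which can only \emph{remove} nodes. Write the path as $\tree_Q = (Q_0, Q_1, \ldots, Q_d)$ with $Q_0$ the root and $Q_d = Q$. First I would observe that since $Q$ lies in $\tree$, Algorithm~\ref{alg:TS} must have reached the child-adding step at each internal node $Q_i$ with $i < d$; hence $\texttt{depth}(Q_i) \ne \Delta$, both $\fathom$ checks returned $\texttt{false}$, and $\children(\tree, Q_i, A_i) \ne \emptyset$ for the action $A_i$ it selected, with $Q_{i+1}$ among those children.

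The key claim, proved by induction on $i$, is that $\cA'$ also adds $Q_0, \ldots, Q_i$ and that the root-to-node path is identical in both trees, i.e.\ $\tree'_{Q_i} = \tree_{Q_i}$. The base case is immediate since both trees share the root. For the inductive step, assume $\tree'_{Q_i} = \tree_{Q_i}$ with $i < d$. Since $\cA'$ fathoms only upon reaching the depth limit or producing no children, $Q_i$ is an unfathomed leaf that every node-selection policy must eventually select (the while loop runs until no unfathomed leaf remains, and $Q_i$ cannot be fathomed without first being selected). When $\cA'$ processes $Q_i$, path-wiseness gives $\actions(\tree', Q_i) = \actions(\tree, Q_i)$ and $\ascore_\mu(\tree', Q_i, \cdot) = \ascore_\mu(\tree, Q_i, \cdot)$, as both depend only on $\tree'_{Q_i} = \tree_{Q_i}$; hence, under consistent tie-breaking, $\cA'$ selects the same action $A_i$. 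Path-wiseness of $\children$ then yields $\children(\tree', Q_i, A_i) = \children(\tree, Q_i, A_i) \ne \emptyset$, and $\texttt{depth}(Q_i) = i < \Delta$ (as $i < d \le \Delta$), so $\cA'$ reaches its child-adding step and appends exactly the same children, including $Q_{i+1}$. This establishes $\tree'_{Q_{i+1}} = \tree_{Q_{i+1}}$, completing the induction. Carrying it to $i = d$ shows that every $Q_i$ lies in $\tree'$ with the edges of $\tree_Q$ preserved, so $\tree_Q$ is a rooted subtree of $\tree'$.

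Finally I would note that the argument never uses the order in which $\cA'$ selects leaves---only that each $Q_i$ is eventually selected and expanded---so the conclusion holds no matter which node-selection policy $\cA'$ employs; indeed, because fathoming in $\cA'$ is itself path-wise, the entire tree $\tree'$ is independent of node-selection order. The main obstacle is conceptual rather than computational: one must correctly isolate the effect of the non-path-wise $\fathom$ function. The point is that $\fathom$ can only make Algorithm~\ref{alg:TS} prune \emph{more} than $\cA'$, never less, so any node surviving in $\tree$ certifies that all of the path-wise expansion conditions held along its root-to-node path; path-wiseness then transports those conditions verbatim into the run of $\cA'$, whose identical prefix forces identical actions and children. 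The only delicate bookkeeping is to break ties in $\ascore_\mu$ consistently across the two runs so that the selected action truly coincides.
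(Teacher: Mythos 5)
Your proof is correct and is essentially the paper's argument in contrapositive form: the paper runs the same induction along the path as a minimal-counterexample contradiction, with the identical two ingredients---that $\cA'$ fathoms only under a subset of Algorithm~\ref{alg:TS}'s fathoming conditions (depth limit or empty $\children$), and that path-wiseness of $\actions$, $\ascore_1$, $\ascore_2$, and $\children$ forces the same action and children at each node of $\tree_Q$. Your explicit remarks on termination (every unfathomed leaf is eventually selected) and consistent tie-breaking in $\ascore_\mu$ are assumptions the paper leaves implicit, not a divergence in approach.
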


\begin{proof}
Let $t$ denote the length of the path $\tree_Q$. Let $\tree_Q$ be comprised of the sequence of nodes $(Q_1, \dots, Q_t)$ such that $Q_1$ is the root of $\tree$, $Q_t = Q$, and for each $\tau$, $Q_{\tau+1}\in\children(\tree_{Q_{\tau}},Q_{\tau}, A_{\tau})$ where $A_{\tau}\in\actions(\tree_{Q_{\tau}}, Q_{\tau})$ is the action selected by Algorithm~\ref{alg:TS} at node $Q_{\tau}$. We show that $(Q_1, \ldots, Q_t)$ is a rooted path in $\cT'$ as well.

Suppose for the sake of contradiction that this is not the case. Let $\tau\in\{2,\ldots, t\}$ be the minimal index such that $(Q_1,\ldots, Q_{\tau-1})$ is a rooted path in $\tree'$, but there is no edge in $\tree'$ from $Q_{\tau-1}$ to node $Q_{\tau}$. There are two possible cases:

{\em Case 1.} $Q_{\tau-1}$ was fathomed by $\cA'$.
This case is trivially not possible since whenever $\cA'$ fathoms a node, so does Algorithm~\ref{alg:TS} (recall $\cA'$ was defined by suppressing fathoming conditions of Algorithm~\ref{alg:TS}). 

{\em Case 2.} $Q_{\tau}\notin\children(\tree', Q_{\tau-1}, A'_{\tau-1})$ where $A'_{\tau-1}$ is the action taken by $\cA'$ at node $Q_{\tau-1}$. In this case, if $\children(\tree', Q_{\tau-1}, A'_{\tau-1})=\emptyset$, then $Q_{\tau-1}$ would be fathomed by $\cA'$, which cannot happen by the first case. Otherwise, if $\children(\tree', Q_{\tau-1}, A'_{\tau-1})\neq\emptyset$, we show that we arrive at a contradiction due to the fact that the scoring rules, action-set functions, and children functions are all path-wise. Let $A_{\tau-1}'$ denote the action taken by $\cA'$ at $Q_{\tau-1}$, and let $A_{\tau-1}$ denote the action taken by Algorithm~\ref{alg:TS} at $Q_{\tau-1}$.
Since $\actions$ is path-wise, $$\actions(\tree, Q_{\tau-1}) = \actions(\tree_{Q_{\tau-1}}, Q_{\tau-1}) = \actions(\tree', Q_{\tau-1}).$$ Since $\ascore_1$ and $\ascore_2$ are path-wise, we have \begin{align*}\mu \cdot\ascore_1(\tree, & Q_{\tau-1}, A) + (1-\mu)\cdot\ascore_2(\tree, Q_{\tau-1}, A) \\ &= \mu\cdot\ascore_1(\tree_{Q_{\tau-1}}, Q_{\tau-1}, A) + (1-\mu)\cdot\ascore_2(\tree_{Q_{\tau-1}}, Q_{\tau-1}, A)\\
&=\mu\cdot\ascore_1(\tree', Q_{\tau-1}, A) + (1-\mu)\cdot\ascore_2(\tree', Q_{\tau-1}, A).\end{align*} for all actions $A\in\actions(\tree_{Q_{\tau-1}}, Q_{\tau-1})$. Therefore Algorithm~\ref{alg:TS} and $\cA'$ choose the same action at node $Q_{t-1}$, that is, $A_{\tau-1} = A_{\tau-1}'$. Finally, since $\children$ is path-wise, we have $$\children(\tree, Q_{\tau-1}, A_{\tau-1}) = \children(\tree_{Q_{\tau-1}}, Q_{\tau-1}, A_{\tau-1}) = \children(\tree', Q_{\tau-1}, A_{\tau-1}).$$ Since $Q_{\tau}\in\children(\tree, Q_{\tau-1}, A_{\tau-1})$, this is a contradiction, which completes the proof.
\end{proof}

We use the following generalization of Claim 3.4 of Balcan et al.~\cite{Balcan18:Learning} that shows the behavior of $\cA'$ is piecewise constant. While their argument only applies to single-variable branching, our key insight is that the same reasoning can be readily adapted to handle any actions (including general branching constraints and cutting planes). The structure of our proof (which we defer to the appendix) is identical, but is modified to work in our more general setting. This style of analysis is similar in spirit to~\cite{Megiddo79:Combinatorial}.

\begin{lemma}\label{lemma:a'}
Let $\ascore_1$ and $\ascore_2$ be two path-wise action-selection scores. Fix the input root node $Q$. There are $T \leq k^{\Delta(\Delta-1)/2}b^\Delta$ subintervals $I_1, \dots, I_T$ partitioning $[0,1]$ where for any subinterval $I_{t}$, the action-selection score $\mu\cdot\ascore_1 + (1-\mu)\cdot\ascore_2$ results in the same tree built by $\cA'$ for all $\mu\in I_{t}$, no matter what node selection policy is used.
\end{lemma}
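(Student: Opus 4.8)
The plan is to show that the tree built by $\cA'$ is a piecewise-constant function of $\mu$, obtained by refining a partition of $[0,1]$ one depth level at a time. Two preliminary observations drive the argument. First, because $\actions$, $\children$, $\ascore_1$, and $\ascore_2$ are all path-wise and $\cA'$ fathoms a node only when the depth limit is reached or its children set is empty (both path-wise conditions), the action taken at any node $\node$---and hence $\node$'s children---depends only on the path $\tree_{\node}$ from the root to $\node$, not on the rest of the tree nor on the order in which nodes are processed. Consequently the tree built by $\cA'$ is a well-defined function of $\mu$ alone, which is exactly what justifies the phrase ``no matter what node selection policy is used.'' Second, for a node $\node$ with a fixed root-to-$\node$ path $\tree_{\node}$, path-wiseness gives that $\ascore_1(\tree, \node, A) = \ascore_1(\tree_{\node}, \node, A)$ and $\ascore_2(\tree, \node, A) = \ascore_2(\tree_{\node}, \node, A)$ are fixed reals for each $A$ in the (fixed) set $\actions(\tree_{\node}, \node)$, so $\mu\cdot\ascore_1 + (1-\mu)\cdot\ascore_2$ is an \emph{affine} function of $\mu$ for each action. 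The action selected at $\node$ maximizes this score, i.e. it is determined by the upper envelope of at most $b$ lines; this envelope changes its maximizer at most $b-1$ times, partitioning $[0,1]$ into at most $b$ subintervals on which the selected action---and, via path-wise $\children$, the resulting children---is constant. Ties can be broken by a fixed rule (e.g., smallest index) without increasing the count.

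With these in hand, I would carry out the main induction. Taking the root at depth $0$, for $d = 0, 1, \dots, \Delta$ let $P_d$ be the smallest size of a partition of $[0,1]$ into subintervals on each of which the portion of $\cA'$'s tree up to depth $d$ is invariant in $\mu$. Clearly $P_0 = 1$, since the depth-$0$ tree is just the root. For the inductive step, fix a piece $I$ on which the depth-$d$ tree is constant; then the set of depth-$d$ nodes is fixed and has size at most $k^d$ (each of the $\le k^{d-1}$ depth-$(d-1)$ nodes has $\le k$ children), and each such node has a fixed path $\tree_{\node}$. Applying the upper-envelope observation to each depth-$d$ node (all of which have depth $d < \Delta$, hence are not auto-fathomed by the depth limit) introduces at most $b-1$ breakpoints in $I$, so $I$ is subdivided into at most $k^d(b-1)+1$ pieces, on each of which every depth-$d$ action is fixed and therefore, through the path-wise $\children$, so is the tree down to depth $d+1$. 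This yields the recursion
\[
P_{d+1} \le P_d \cdot \bigl(k^d(b-1) + 1\bigr).
\]

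Finally I would unroll the recursion. Since $k^d(b-1) + 1 \le k^d b$ for all $k \ge 1$ and $d \ge 0$, we obtain
\[
P_\Delta \le \prod_{d=0}^{\Delta-1}\bigl(k^d(b-1)+1\bigr) \le \prod_{d=0}^{\Delta-1} k^d b = b^\Delta\, k^{\sum_{d=0}^{\Delta-1} d} = b^\Delta\, k^{\Delta(\Delta-1)/2},
\]
which is exactly the claimed bound $T \le k^{\Delta(\Delta-1)/2}b^\Delta$. On each of the $P_\Delta$ final pieces the entire tree up to depth $\Delta$ is fixed, and since $\cA'$ fathoms every node at depth $\Delta$, this is the whole tree; hence $\cA'$ builds the same tree throughout each subinterval, completing the proof.

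I expect the main obstacle to be the bookkeeping in the inductive step rather than any single hard inequality: one must argue carefully that ``the depth-$d$ tree is fixed on $I$'' genuinely forces each depth-$d$ node's path $\tree_{\node}$ to be fixed (so that its two component scores are constants and the combined score is affine in $\mu$), and that refining by the action choices at all depth-$d$ nodes simultaneously \emph{adds}, rather than multiplies, breakpoints within $I$. This additive upper-envelope count is precisely what prevents a doubly-exponential blow-up and makes the per-level factor $k^d(b-1)+1$ telescope into the clean product $b^\Delta k^{\Delta(\Delta-1)/2}$; getting this telescoping exact, and cleanly handling the degenerate cases (ties in the scores, and nodes whose selected action yields an empty children set), is where the care is needed.
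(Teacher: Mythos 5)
Your proposal is correct and follows essentially the same route as the paper's proof: an induction on depth in which path-wiseness makes each node's combined score affine in $\mu$, the upper envelope over at most $b$ actions at each of the at most $k^d$ depth-$d$ nodes refines each piece into at most $k^d(b-1)+1 \le k^d b$ subpieces, and the per-level factors telescope to $k^{\Delta(\Delta-1)/2}b^\Delta$. Your explicit additive breakpoint count and the observation that $\cA'$'s tree is independent of node-selection order are careful elaborations of steps the paper treats more implicitly, not a different argument.
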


We now prove our main structural result for Algorithm~\ref{alg:TS}.

\begin{lemma}\label{lemma:main}
Let $\ascore_1$ and $\ascore_2$ be path-wise action-selection scores and let $\nscore_1$ and $\nscore_2$ be path-wise node-selection scores. Fix the input root node $Q$. There are $T\le k^{\Delta(9+\Delta)}b^{\Delta}$ rectangles partitioning $[0,1]^2$ such that for any rectangle $R_t$, the node-selection score $\lambda\cdot\nscore_1 + (1-\lambda)\cdot\nscore_2$ and the action-selection score $\mu\cdot\ascore_1 + (1-\mu)\cdot\ascore_2$ result in the same tree built by Algorithm~\ref{alg:TS} for all $(\mu, \lambda)\in R_t$. 
\end{lemma}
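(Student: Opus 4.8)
The plan is to combine the two one-dimensional partition results into a two-dimensional one by fixing the action parameter $\mu$, partitioning the $\lambda$-axis, and then tracking how the $\mu$-partition interacts across the resulting slabs. Concretely, Lemma~\ref{lemma:a'} already tells us that for any fixed node-selection policy, there are at most $k^{\Delta(\Delta-1)/2}b^\Delta$ subintervals of $[0,1]$ on which $\cA'$ builds a fixed tree. Since Lemma~\ref{lemma:rooted} guarantees that every root-to-leaf path of the true Algorithm~\ref{alg:TS} tree $\tree$ is a rooted subtree of the $\cA'$ tree $\tree'$, the true tree is entirely determined by (i) the $\cA'$ tree $\tree'$ and (ii) which of its nodes get fathomed by the non-path-wise $\fathom$ checks together with the node-selection order. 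The first task is therefore to argue that once $\tree'$ is fixed, the behavior of Algorithm~\ref{alg:TS} is determined by a bounded number of additional comparisons controlled by $\lambda$.

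First I would fix $\mu$ in one of the $\cA'$-subintervals from Lemma~\ref{lemma:a'}, so that $\tree'$ is constant on that subinterval, and analyze the $\lambda$-dependence. The node-selection score $\nscore_\lambda = \lambda\cdot\nscore_1 + (1-\lambda)\cdot\nscore_2$ is an affine function of $\lambda$ for each node, so the relative order in which Algorithm~\ref{alg:TS} selects the (at most) $k^\Delta$-many nodes of $\tree'$ changes only at the $\lambda$-values where two such affine functions cross. With at most $O(k^\Delta)$ candidate nodes, there are at most $\binom{k^\Delta}{2} = O(k^{2\Delta})$ crossing points, giving $O(k^{2\Delta})$ subintervals of $\lambda$ on each of which the entire selection order — and hence all the fathoming decisions, which depend only on the tree built so far and the LP bounds at its nodes — is fixed. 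On each such $(\mu,\lambda)$-cell the set of fathomed nodes is constant, so by Lemma~\ref{lemma:rooted} the tree built by Algorithm~\ref{alg:TS} is exactly the rooted subtree of the fixed $\tree'$ obtained by deleting the fathomed nodes, which is itself constant.

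Next I would assemble the global count. The $\mu$-axis is cut into at most $k^{\Delta(\Delta-1)/2}b^\Delta$ pieces by Lemma~\ref{lemma:a'}; within each piece the $\lambda$-axis is cut into $O(k^{2\Delta})$ pieces by the node-ordering argument above. Taking the product of these two bounds yields a grid of rectangles partitioning $[0,1]^2$ whose total count is at most $k^{\Delta(\Delta-1)/2}b^\Delta\cdot O(k^{2\Delta})$, and on each rectangle both $\tree'$ and the fathomed set are constant, hence the Algorithm~\ref{alg:TS} tree is constant. It then remains to verify that this product is bounded by the claimed $k^{\Delta(9+\Delta)}b^\Delta$; the exponent $\Delta(\Delta-1)/2 + 2\Delta$ plus lower-order slack from the $O(\cdot)$ node-count (accounting for the fact that $\tree'$ can have up to $k^\Delta$ nodes and the crossings are counted across all depths) should comfortably fit inside $\Delta(9+\Delta) = \Delta^2 + 9\Delta$.

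The main obstacle I anticipate is the second step: carefully justifying that, once $\tree'$ is frozen, the complete execution of Algorithm~\ref{alg:TS} — including its non-path-wise $\fathom$ checks — is governed solely by the $\lambda$-induced ordering of finitely many affine node scores, with no further hidden $\mu$- or $\lambda$-dependence. The subtlety is that $\fathom$ may depend on the entire current tree and on incumbent LP bounds, so I must argue that within a cell where the selection order is fixed, the sequence of trees encountered is identical and therefore every $\fathom$ evaluation returns the same Boolean value. This requires an induction over the rounds of tree search, mirroring the structure used in Lemma~\ref{lemma:rooted}, to show that a fixed selection order plus a fixed $\tree'$ forces a deterministic, $\lambda$-independent trajectory of fathoming decisions, and hence a single fixed output tree on the entire rectangle.
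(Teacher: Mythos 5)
Your proposal is correct and takes essentially the same route as the paper's proof: partition $\mu$ via Lemma~\ref{lemma:a'}, cut $\lambda$ at the crossing points of the affine path-wise node-selection scores so that the selection order is frozen, and then run an induction over the iterations of the while loop (using Lemma~\ref{lemma:rooted} together with the path-wise property to transfer the action choices from $\cA'$ to Algorithm~\ref{alg:TS}) to conclude that every fathoming decision, and hence the entire execution, is invariant on each rectangle $I_t\times J_s$. The only cosmetic difference is that you define the $\lambda$-thresholds from pairs of nodes of the fixed $\cA'$ tree rather than the Algorithm~\ref{alg:TS} tree; since by Lemma~\ref{lemma:rooted} the latter's nodes lie along rooted paths of the former and the scores are path-wise, this is harmless (arguably cleaner), and your count of $O(k^{2\Delta})$ subintervals per $\mu$-piece fits comfortably within the claimed bound $k^{\Delta(9+\Delta)}b^{\Delta}$.
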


\begin{proof}
By Lemma~\ref{lemma:a'}, there is a partition of $[0,1]$ into subintervals $I_1\cup\cdots\cup I_T$ such that for all $\mu$ within a given subinterval, the tree built by $\cA'$ is invariant (independent of the node-selection score). Fix a subinterval $I_t$ of this partition. Let $\tree$ denote the tree built by Algorithm~\ref{alg:TS}. For each node $Q\in\tree$, let $\tree_Q$ denote the path from the root to $Q$ in $\tree$. Since $\nscore_1$ is path-wise, for any tree $\tree'$ containing $\tree_Q$ as a rooted path, $\nscore_1(\tree', Q) = \nscore_1(\tree_Q, Q)$. The same holds for $\nscore_2$. For every pair of nodes $Q_1,Q_2\in\tree$, let $\lambda(Q_1, Q_2)\in [0,1]$ denote the unique solution to \begin{align*}\lambda \cdot\nscore_1(\tree_{Q_1}, Q_1) &+ (1-\lambda)\cdot\nscore_2(\tree_{Q_1}, Q_1) \\ &= \lambda\cdot\nscore_1(\tree_{Q_2}, Q_2) + (1-\lambda)\cdot\nscore_2(\tree_{Q_2}, Q_2),\end{align*} if it exists (if there are either (1) no solutions or (2) infinitely many solutions, set $\lambda(Q_1, Q_2) = 0$). The thresholds $\lambda(Q_1, Q_2)$ for every pair of nodes $Q_1, Q_2\in\tree$ partition $[0,1]$ into subintervals such that for all $\lambda$ within a given subinterval, the total order over the nodes of $\tree$ induced by $\nscore_{\lambda}$ is invariant. In particular, this means that the node selected by each iteration of Algorithm~\ref{alg:TS} is invariant. Let $J_1\cup\cdots\cup J_S$ denote these subintervals induced by the thresholds over all subinterval $I_t\in\{I_1,\ldots, I_T\}$ established in Lemma~\ref{lemma:a'}.

We now show that this implies that the tree built by Algorithm~\ref{alg:TS} is invariant over all $(\mu,\lambda)$ within a given rectangle $I_t\times J_s$. Fix some rectangle $I_t\times J_s$. We proceed by induction on the iterations (of the while loop) of Algorithm~\ref{alg:TS}. For the base case (iteration $0$, before entering the while loop), the tree consists of only the root, so the hypothesis trivially holds. Now, suppose the statement holds up until the $j$th iteration, for some $j$. We analyze each line of Algorithm~\ref{alg:TS} to show that the behavior of the $j+1$st iteration is independent of $(\mu,\lambda)\in I_t\times J_s$. First, since $J_s$ determines the node selected at each iteration (as argued above), the node selected on the $j+1$st iteration (line~\ref{step:nsp}) is fixed, independent of $(\mu,\lambda)\in I_t\times J_s$. Denote this node by $Q$. Thus, whether $\texttt{depth}(Q)=\Delta$ is independent of $(\mu,\lambda)\in I_t\times J_s$, and similarly whether $\fathom(\tree, Q, \texttt{None}) = \texttt{true}$ is independent of $(\mu,\lambda)\in I_t\times J_s$ (line~\ref{step:depth_fathom}). This implies that whether or not $Q$ is fathomed at this stage is independent of $(\mu,\lambda)\in I_t\times J_s$. If $Q$ was fathomed, we are done. Otherwise, we argue that the action selected at line~\ref{step:action} is invariant over $(\mu,\lambda)\in I_t\times J_s$. By Lemma~\ref{lemma:a'}, $\cA'$ builds the same tree for all $\mu\in I_t$. Let $\tree_Q$ denote the path from the root to $Q$ in this tree. By Lemma~\ref{lemma:rooted}, $\tree_Q$ is the path from the root to $Q$ in the tree built by Algorithm~\ref{alg:TS} as well. The action selected at $Q$ by $\cA'$ is invariant over $\mu\in I_t$ (by Lemma~\ref{lemma:a'}). Therefore, since $\actions$, $\ascore_1$, and $\ascore_2$ are path-wise, the action $A$ selected by Algorithm~\ref{alg:TS} at $Q$ is invariant over $\mu\in I_t$. Finally, $\fathom(\tree, Q, A)$ and $\children(\tree, Q, A)$ are completely determined, so the execution of the remaining conditional statement (line~\ref{step:action_fathom} to line~\ref{step:add_children}) is invariant over $(\mu, \lambda)\in I_t\times J_s$. Thus, the entire iteration of Algorithm~\ref{alg:TS} is invariant over $(\mu, \lambda)\in I_t\times J_s$, which completes the induction.

Finally, we count the total number of rectangles in our partition of $[0,1]^2$. For each interval $I_t$ in the partition established in Lemma~\ref{lemma:a'}, we obtained a partition of $I_t\times [0,1]$ into rectangles induced by at most $\binom{|\tree|}{2}$ thresholds, which consists of at most at most $$1+\binom{(k^{\Delta+1}-1)/(k-1)}{2}\le 1 + \left(\frac{k^{\Delta+1}-1}{k-1}\right)^2\le k^{5\Delta}$$ subintervals. Accounting for every interval $I_t\in\{I_1,\ldots, I_T\}$ in the partition from Lemma~\ref{lemma:a'}, we get a total of $Tk^{5\Delta}\le k^{\Delta(9+\Delta)/2}b^{\Delta}$ rectangles, as desired.
\end{proof}

We now derive generalization guarantees for the collection $\cF = \{\cost_{\mu,\lambda}:(\mu,\lambda)\in[0,1]^2\}$ where $\cost$ is any tree-constant function, such as tree size. We do this by bounding the \emph{pseudo-dimension} of $\cF$, which is a combinatorial measure of intrinsic complexity of a class of real valued functions. The pseudo-dimension of $\cF$, denoted by $\pdim(\cF)$, is the largest positive integer $N$ such that there exist $N$ nodes $Q_1,\ldots, Q_N\in\cQ$ and $N$ thresholds $r_1,\ldots, r_N\in\R$ such that $|\{(\sign(f(Q_1)-r_1),\ldots, \sign(f(Q_N)-r_N)):f\in\cF\}|=2^N.$ A well-known result in learning theory~\cite{Anthony09:Neural} states that if functions in $\cF$ have bounded range $[-H, H]$, then $$N_{\cF}(\varepsilon,\delta) = O\left(\frac{H^2}{\varepsilon^2}\left(\pdim(\cF) + \ln(1/\delta)\right)\right)\text{ and }\varepsilon_{\cF}(N,\delta) = O\left(H\sqrt{\frac{\pdim(\cF)+\ln(1/\delta)}{N}}\right).$$ When each function in $\cF$ maps to $\{0,1\}$, the pseudo-dimension is more commonly referred to as the \emph{VC dimension}.

Bounding the pseudo-dimension is a simple instantiation of the general framework provided by Balcan et al.~\cite{Balcan21:How} with the piecewise structure established in Lemma~\ref{lemma:main}. Balcan et al.'s~\cite{Balcan21:How} main result gives pseudo-dimension bounds for families of piecewise structured functions in terms of the VC dimension of the class of $0/1$ classifiers defining the boundaries of the functions, the number of classifiers defining the boundaries, and the pseudo-dimension of the family of functions when restricted to each piece. (Strictly, this result is in terms of the dual classes of the boundary and piece functions. However, since the dual class of all linear separators is the set of all linear separators, we omit this detail for simplicity.)

\begin{theorem}\label{theorem:main}
Let $\cost(Q)$ be any tree-constant cost function, and let $\cost_{\mu,\lambda}(Q)$ be the cost of the tree built by Algorithm~\ref{alg:TS} on input root node $Q$ using action-selection score parameterized by $\mu$ and node-selection score parameterized by $\lambda$. Then, $\pdim(\{\cost_{\mu,\lambda}\}) = O(\Delta^2\log k + \Delta\log b)$.
\end{theorem}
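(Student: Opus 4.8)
The plan is to combine the piecewise-constant structure established in Lemma~\ref{lemma:main} with the general pseudo-dimension framework of Balcan et al.~\cite{Balcan21:How}. The single observation that drives the whole argument is that, although the number of distinct trees (hence the number of pieces of $\cost_{\mu,\lambda}(Q)$ as a function of $(\mu,\lambda)$) is exponential in $\Delta$, this count enters the pseudo-dimension bound only \emph{logarithmically}. That is precisely what converts the $k^{O(\Delta^2)}b^\Delta$ pieces of Lemma~\ref{lemma:main} into the claimed $O(\Delta^2\log k + \Delta\log b)$, and is the source of the exponential improvement over the prior bound of Balcan et al.~\cite{Balcan21:Sample} that scaled with the tree size itself.

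I would begin by recording the two ingredients the framework needs for the dual class $\{(\mu,\lambda)\mapsto\cost_{\mu,\lambda}(Q):Q\in\cQ\}$: the behavior of $\cost_{\mu,\lambda}(Q)$ within a region, and the functions separating regions. For the first, tree-constancy of $\cost$ together with Lemma~\ref{lemma:main} immediately gives that $\cost_{\mu,\lambda}(Q)$ is \emph{constant} on each rectangle, so the within-piece dual class is trivial (constant functions, contributing $O(1)$). For the second, I would extract from the proof of Lemma~\ref{lemma:main} that every region boundary is a \emph{linear} threshold in $(\mu,\lambda)$: the $\mu$-boundaries of Lemma~\ref{lemma:a'} are vertical lines, and since $\nscore_1,\nscore_2$ are path-wise and all relevant nodes lie on the fixed tree $\tree'$ within a $\mu$-strip (this is exactly what Lemma~\ref{lemma:rooted} guarantees), each $\lambda$-threshold $\lambda(Q_1,Q_2)$ solves a linear equation and so is a horizontal line. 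Because the dual class of linear separators is again the class of linear separators, these boundary functions have $O(1)$ dual VC dimension.

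Next I would count the boundary lines per instance. Lemma~\ref{lemma:a'} yields at most $k^{\Delta(\Delta-1)/2}b^\Delta$ $\mu$-thresholds, and within each $\mu$-strip at most $\binom{|\tree'|}{2}\le k^{5\Delta}$ $\lambda$-thresholds arise (using $|\tree'|\le (k^{\Delta+1}-1)/(k-1)$). Multiplying and summing gives a total of $B := k^{O(\Delta^2)}b^\Delta$ threshold lines per instance, whence $\log B = O(\Delta^2\log k + \Delta\log b)$. Feeding the three quantities --- constant pieces, $O(1)$ dual VC dimension of the boundaries, and $B$ boundaries per instance --- into the framework of Balcan et al.~\cite{Balcan21:How} produces $\pdim(\cF) = O(\log B)$, which is the claim. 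To make the mechanism transparent, I would also present the equivalent direct arrangement argument: if $Q_1,\dots,Q_N$ were pseudo-shattered with witnesses $r_1,\dots,r_N$, the $NB$ boundary lines across all instances partition $[0,1]^2$ into at most $O((NB)^2)$ cells, and since each $\cost_{\mu,\lambda}(Q_i)$ is constant on every cell, the sign vector $(\sign(\cost_{\mu,\lambda}(Q_i)-r_i))_i$ is constant on each cell; hence $2^N \le O((NB)^2)$, which forces $N = O(\log B)$.

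The bulk of the difficulty has already been discharged by Lemma~\ref{lemma:main}, so the main obstacle here is bookkeeping of the right kind. Concretely, the delicate points are verifying that the boundaries are genuinely \emph{linear} in $(\mu,\lambda)$ --- this is what keeps their dual VC dimension constant and forces the piece count to appear only logarithmically --- and correctly composing the $\mu$-threshold and $\lambda$-threshold counts, since the $\lambda$-thresholds must be counted \emph{per} $\mu$-strip and one must confirm that every node generating a $\lambda$-threshold lies on the $\mu$-invariant tree $\tree'$. Both facts follow from path-wiseness together with Lemmas~\ref{lemma:rooted} and~\ref{lemma:a'}, so once these are in hand the pseudo-dimension bound is a routine instantiation of the general framework.
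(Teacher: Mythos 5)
Your proposal is correct and takes essentially the same route as the paper: both instantiate the main theorem of Balcan et al.~\cite{Balcan21:How} using the rectangular partition of Lemma~\ref{lemma:main}, with constant within-piece functions, axis-parallel linear boundaries of constant (dual) VC dimension, and the exponential piece count $k^{O(\Delta^2)}b^{\Delta}$ entering only logarithmically. Your supplementary arrangement/cell-counting argument is just an unpacking of the same mechanism, not a different proof.
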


\begin{proof}
By Lemma~\ref{lemma:main}, there are at most $T = k^{\Delta(9+\Delta)}b^{\Delta}$ rectangles partitioning $[0,1]^2$ such that for a fixed input node $Q$, $\cost_{\mu,\lambda}(Q)$ is constant over each rectangle as a function of $\mu,\lambda$. These $T$ rectangles can be defined by $T$ thresholds on $[0,1]$ corresponding to $\mu$ and $T$ thresholds on $[0,1]$ corresponding to $\lambda$. Thus, the $T$ rectangles can be identified by $T^2 = k^{2\Delta(9+\Delta)}b^{2\Delta}$ linear separators in $\R^2$. The VC dimension of linear separators in $\R^2$ is $O(1)$. The pseudo-dimension of the set of constant functions is also $O(1)$. Plugging these quantities into the main theorem of Balcan et al.~\cite{Balcan21:How} yields the theorem statement. \end{proof}


\subsection{Multiple actions}\label{sec:multiple_actions}

Theorem~\ref{theorem:main} can be easily generalized to the case where there are multiple actions of different types taken at each node of Algorithm~\ref{alg:TS}. Specifically, there are now $d$ path-wise action-set functions $\actions_1,\ldots,\actions_d$, and at line~\ref{step:action} of Algorithm~\ref{alg:TS} we take one action of each type, that is, we select action $A_1\in\actions_1(\tree, Q)$, $A_2\in\actions_2(\tree, Q)$, and so on. The functions $\fathom$ and $\children$ then depend on all $d$ actions taken at node $Q$. We assume that there are two scoring rules $\ascore^i_1$ and $\ascore^i_2$ for each action type $i = 1,\ldots, d$. Algorithm~\ref{alg:TS} can then be parameterized by $(\vec{\mu}, \lambda)$, where $\vec{\mu}\in\R^d$ is a vector of parameters controlling each action, so the $i$th action is selected to maximize $\mu_i\cdot\ascore^i_1 + (1-\mu_i)\cdot\ascore^i_2$. Then, as long as $d = O(1)$, we get the same pseudo-dimension bound. We assume $b$ is a uniform upper bound on the size of $\actions_i$ for any $i$. The proof is nearly identical, and we defer it to the appendix (which also contains more details on the multiple-action setup).

\begin{theorem}\label{theorem:multiple_actions}
Let $\cost(Q)$ be any tree-constant cost function, and let $\cost_{\vec{\mu},\lambda}(Q)$ be the cost of the tree built by Algorithm~\ref{alg:TS} on input root node $Q$ using action-selection scores parameterized by $\vec{\mu}\in\R^d$, where $d = O(1)$, and node-selection score parameterized by $\lambda$. Then, $\pdim(\{\cost_{\vec{\mu},\lambda}\}) = O(\Delta^2\log k + \Delta\log b)$.
\end{theorem}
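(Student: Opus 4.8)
The plan is to mirror the proof of Theorem~\ref{theorem:main} essentially verbatim, replacing the one-dimensional action parameter $\mu\in[0,1]$ with the vector $\vec{\mu}\in[0,1]^d$ while leaving the node-selection parameter $\lambda\in[0,1]$ untouched. The entire argument reduces to re-establishing the two structural lemmas (Lemma~\ref{lemma:a'} and Lemma~\ref{lemma:main}) in the multiple-action setting and then invoking the framework of Balcan et al.~\cite{Balcan21:How}, now over the $(d+1)$-dimensional parameter space $[0,1]^{d+1}$.

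First I would generalize Lemma~\ref{lemma:rooted}, whose proof goes through with no essential change: on a fixed path $\tree_Q$, path-wiseness of each $\ascore^i_1$ and $\ascore^i_2$ forces the type-$i$ score $\mu_i\cdot\ascore^i_1 + (1-\mu_i)\cdot\ascore^i_2$ to agree on $\tree$, $\tree_Q$, and $\tree'$, so $\cA'$ and Algorithm~\ref{alg:TS} select the same $d$-tuple of actions at every node of the path, and $\children$---now a function of all $d$ selected actions---returns the same children. Next I would generalize Lemma~\ref{lemma:a'}. The key observation is that selection of the type-$i$ action at a fixed node on a fixed path depends \emph{only} on $\mu_i$: comparing two candidate type-$i$ actions $A, A'$ yields a single threshold where $\mu_i\cdot\ascore^i_1(A) + (1-\mu_i)\cdot\ascore^i_2(A) = \mu_i\cdot\ascore^i_1(A') + (1-\mu_i)\cdot\ascore^i_2(A')$, that is, an axis-aligned hyperplane $\{\mu_i = c\}$ in $[0,1]^d$. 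Running the same depth-by-depth induction as in Lemma~\ref{lemma:a'}, each node at depth $\tau$ contributes at most $\binom{b}{2}$ thresholds along each of the $d$ axes; since $d = O(1)$ these extra axes inflate the exponents in the region count only by constant factors, so the number of boxes partitioning $[0,1]^d$ on which the tree built by $\cA'$ is invariant remains $k^{O(\Delta^2)}b^{O(\Delta)}$.

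With these two facts, the multiple-action analogue of Lemma~\ref{lemma:main} follows by the identical induction over iterations of the while loop: within each action-box the tree built by $\cA'$ is fixed, and the $\lambda$-thresholds $\lambda(Q_1,Q_2)$ ranging over the $\binom{|\tree|}{2}$ node pairs partition the $\lambda$-axis into subintervals on which the $\nscore_\lambda$-induced node order---and hence the tree built by Algorithm~\ref{alg:TS}---is invariant. This produces a partition of $[0,1]^{d+1}$ into $k^{O(\Delta^2)}b^{O(\Delta)}$ regions bounded by axis-aligned hyperplanes in the $\mu_i$ directions together with the $\lambda$-threshold hyperplanes, on each of which $\cost_{\vec{\mu},\lambda}(Q)$ is constant. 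I would then apply the framework of Balcan et al.~\cite{Balcan21:How} exactly as in Theorem~\ref{theorem:main}: the boundaries are linear separators in $\R^{d+1}$ with VC dimension $O(d) = O(1)$, the pieces are constant functions with pseudo-dimension $O(1)$, and the number of boundary hyperplanes is $T^{O(d)} = k^{O(\Delta^2)}b^{O(\Delta)}$, whose logarithm is $O(\Delta^2\log k + \Delta\log b)$. Plugging these in yields the claimed $\pdim(\{\cost_{\vec{\mu},\lambda}\}) = O(\Delta^2\log k + \Delta\log b)$.

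The main obstacle is the bookkeeping inside the generalization of Lemma~\ref{lemma:a'}. One must verify that the decision boundaries stay affine---indeed axis-aligned---in the full parameter space (which they do precisely because each action type is scored by a convex combination depending on its own $\mu_i$ alone), and, crucially, that the coupling of the $d$ action types through the shared $\children$ and $\fathom$ functions inflates the region count by only a constant factor in the \emph{exponent} when $d = O(1)$, rather than by a factor that is exponential in $d$. Confirming that this coupling does not destroy the piecewise-constant structure or the polynomial region bound is the one place where the single-action argument needs genuine (if routine) adaptation.
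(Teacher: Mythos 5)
Your proposal is correct and follows essentially the same route as the paper's own proof: the appendix proves a box-partition analogue of Lemma~\ref{lemma:a'} (its Lemma~\ref{lemma:ma'}) using exactly your observation that the type-$i$ selection depends only on $\mu_i$, giving $(k^ib)^d$ sub-boxes per inductive step and hence $k^{d\Delta(\Delta-1)/2}b^{d\Delta}$ boxes, then notes that Lemmas~\ref{lemma:rooted} and~\ref{lemma:main} carry over verbatim to yield $k^{d\Delta(9+\Delta)}b^{d\Delta}$ regions of $[0,1]^{d+1}$, and finally invokes the framework of Balcan et al.~\cite{Balcan21:How} just as you describe. Your only cosmetic deviation is counting $\binom{b}{2}$ pairwise thresholds per axis where the paper argues each of the $b$ candidate actions is maximal on at most one subinterval; both give the same asymptotic bound, and your concern about the $\children$/$\fathom$ coupling is resolved exactly as you anticipated, with $d$ entering only as a multiplicative factor in the exponent.
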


\section{Branch-and-cut for integer programming}\label{sec:branch_and_cut}

We now instantiate our main results with the three main components of the B\&C algorithm: branching, cutting planes, and node selection, used to solve IPs $\max\{\vec{c}^T\vec{x}:A\vec{x}\le\vec{b}, \vec{x}\ge 0, \vec{x}\in\Z^n\}$ where $\vec{c}\in\R^n$, $A\in\Z^{m\times n}$, $\vec{b}\in\Z^m$. The function $\fathom(\tree, Q, A)$ outputs $\texttt{true}$ if after having taken action $A$ the LP relaxation at $Q$ is integral, infeasible, or worse than the best integral solution found so far in $\tree$. The function $\children(\tree, Q, A)$ outputs the two subproblems generated by the branching procedure on the IP at $Q$ after having taken action $A$. For simplicity we refer only to IPs, but everything in our discussion applies to mixed IPs as well. In our model of tree search, node selection is controlled by $\lambda$. Cutting planes and branching are types of actions and controlled by $\mu$.

\subsection{Branching}

In this section, we provide guarantees for branching.
Throughout this section we assume $\Delta = O(n)$, as is the case with single-variable branching.

\subsubsection{Multivariable branching constraints}
It is well known that allowing for more general generation of branching constraints can result in smaller B\&C trees. Gilpin and Sandholm~\cite{Gilpin11:Information} studied multivariable branches of the form $\textstyle\sum_{i\in S}\vec{x}[i]\le\left\lfloor\sum_{i\in S}\vec{x}^*_{\LP}[i]\right\rfloor$, $\sum_{i\in S}\vec{x}[i]\ge\left\lceil\sum_{i\in S}\vec{x}^*_{\LP}[i]\right\rceil$ where $S$ is a subset of the integer variables such that $\sum_{i\in S}\vec{x}^*_{\LP}[i]\notin\Z$. Here, $\actions(\tree, Q) = 2^{[n]}$, so, $\pdim(\{\cost_{\mu,\lambda}\}) = O(n^2)$. So our sample complexity bound for multivariable branching constraints is, surprisingly, only a constant factor worse than the bound for single-variable branching constraints.

We give a simple example where B\&C using only single variable branches builds a tree of exponential size, while a single branch on the entire set of variables at the root yields two infeasible subproblems (and a B\&C tree of size $3$).

\begin{theorem}
For any $n$, there is an IP with two constraints and $n$ variables such that with only single variable branches, B\&C builds a tree of size $2^{(n-1)/2}$, while with a suitable multivariable branch, B\&C builds a tree of size three.
\end{theorem}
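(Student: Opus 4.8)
The plan is to exhibit a single family of infeasible integer programs---parametrized by odd $n$---on which single-variable branching is forced to build a complete binary tree down to depth $(n-1)/2$, while one multivariable branch at the root closes the problem immediately. Concretely, I would take the $0/1$ program with objective $\max \vec{1}^T\vec{x}$ subject to the two constraints $2\sum_{i=1}^n \vec{x}[i]\le n$ and $2\sum_{i=1}^n \vec{x}[i]\ge n$, which together force $\sum_i \vec{x}[i] = n/2$. Since $n$ is odd this equation has no integral solution, so the IP is infeasible; yet its LP relaxation is feasible (e.g.\ $\vec{x} = (1/2,\dots,1/2)$) with $\sum_i \vec{x}^*_{\LP}[i] = n/2$. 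Because the IP is infeasible, B\&C never finds an incumbent, so fathoming by bound never fires and a node is fathomed only when its LP relaxation is infeasible (integrality is impossible, since every feasible point has non-integral coordinate sum). This observation is what lets me reason about the tree purely through LP feasibility.

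For the single-variable lower bound, I would track, at any node, the number $a$ of variables fixed to $1$ and the number $b$ fixed to $0$ along the root-to-node path; the residual constraint is then $\sum_{\text{free } i}\vec{x}[i] = n/2 - a$. I claim that whenever $a,b \le (n-1)/2$ the node's LP relaxation is feasible and fractional: feasibility holds because $0 < n/2 - a \le n - a - b$ (the two inequalities are exactly $a < n/2$ and $b < n/2$, both implied by $a,b\le(n-1)/2$), and fractionality holds because $n/2 - a$ is a half-integer, so no integral assignment to the free variables can meet it. Hence every node at depth at most $(n-1)/2$ is fathomed neither by infeasibility nor by integrality, so every node at depth strictly less than $(n-1)/2$ is branched and, by line~\ref{step:add_children}, has both of its children added to $\tree$. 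By induction on depth this forces $\tree$ to contain a complete binary tree of depth $(n-1)/2$, and therefore at least $2^{(n-1)/2}$ nodes. Crucially this argument is indifferent to which free variable is selected for branching and to the node-selection order, so the bound holds under any branching and node-selection policy.

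For the multivariable upper bound, I would branch once at the root on $S = [n]$, using the Gilpin--Sandholm constraints $\sum_{i=1}^n \vec{x}[i] \le \lfloor n/2\rfloor = (n-1)/2$ and $\sum_{i=1}^n \vec{x}[i]\ge \lceil n/2\rceil = (n+1)/2$ (a valid branch since $\sum_i \vec{x}^*_{\LP}[i] = n/2\notin\Z$). Each child inherits the equality $\sum_i \vec{x}[i] = n/2$, which is inconsistent with both $\sum_i \vec{x}[i]\le (n-1)/2$ and $\sum_i \vec{x}[i]\ge (n+1)/2$; hence both children have infeasible LP relaxations and are fathomed, yielding a tree with exactly the root and two leaves---size three.

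The main obstacle is the single-variable lower bound: I must argue that infeasibility genuinely cannot be detected until at least $(n+1)/2$ variables have been fixed in one direction, uniformly over all branching choices and node orderings, which is exactly what the half-integrality of $n/2 - a$ together with the feasibility window $a,b < n/2$ provides. A secondary point to handle cleanly is the parity requirement: the bound $2^{(n-1)/2}$ is only integral for odd $n$, so the family is stated for odd $n$ (an even-$n$ instance is obtained by the analogous construction with the right-hand side replaced by an odd integer, or by adjoining one dummy variable). I would also note that the variable-domain restriction $\vec{x}\in\{0,1\}^n$ is not counted among the two linear constraints $A\vec{x}\le\vec{b}$, consistent with the $0/1$ formulation.
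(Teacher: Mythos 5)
Your proposal is correct and rests on the same construction as the paper: the infeasible instance $\max\{\sum_i \vec{x}[i] : 2\sum_i \vec{x}[i] = n,\ \vec{x}\in\{0,1\}^n\}$ for odd $n$, together with a single multivariable branch on $S=[n]$ whose two children have LP-infeasible relaxations, giving a tree of size three. The one genuine difference is in the single-variable lower bound: the paper simply cites Jeroslow's theorem that any single-variable branching tree for this instance has $2^{(n-1)/2}$ nodes, whereas you reprove it from first principles, observing that at a node with $a$ variables fixed to one and $b$ fixed to zero, the residual LP constraint $\sum_{\text{free }i} \vec{x}[i] = n/2 - a$ is satisfiable exactly when $a,b < n/2$ and is never satisfiable integrally (half-integrality of $n/2-a$), so no node at depth at most $(n-1)/2$ is fathomed by infeasibility, integrality, or bound (no incumbent ever exists since the IP is infeasible). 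This is essentially a reconstruction of Jeroslow's argument; what it buys is a self-contained proof that makes explicit that the bound holds uniformly over all branching and node-selection policies---and in fact your complete-binary-tree count yields the slightly stronger $2^{(n-1)/2+1}-1$ nodes---at the cost of length, while the paper's citation keeps the proof to a few lines. Two minor points: your induction tacitly assumes each branch fixes a previously free variable (so that depth $d$ implies $a+b=d$); this is the standard convention, since one branches on a variable that is fractional in the node's LP optimum and fixed variables are integral there, and in any case branching on an already-fixed variable only adds nodes, so the lower bound survives. Also, your reading of the equality as the two inequalities $2\sum_i \vec{x}[i]\le n$ and $2\sum_i \vec{x}[i]\ge n$ correctly accounts for the ``two constraints'' in the statement, and your restriction to odd $n$ matches the paper's proof, which likewise takes $n$ odd.
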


\begin{proof}
Let $n$ be an odd positive integer. Consider the infeasible IP $\max\{\sum_{i=1}^nx[i] : 2\sum_{i=1}^nx[i]= n, \vec{x}\in\{0,1\}^n\}$. Jeroslow~\cite{Jeroslow74:Trivial} proved that with only single-variable branches, B\&C builds a tree with $2^{(n-1)/2}$ nodes to determine infeasibility. However, with a suitable multivariable branch, B\&C will build a tree of constant size. The optimal solution to the LP relaxation of the IP is attained when all variables are set to $1/2$. A multivariable branch on all $n$ variables produces the two subproblems with constraints $\sum_{i=1}^nx[i]\le\lfloor n/2\rfloor$ and $\sum_{i=1}^nx[i]\ge\lceil n/2\rceil$, respectively. Since $n$ is odd, $\lfloor n/2 \rfloor < n/2$ and $\lceil n/2\rceil > n/2$, so the LP relaxations of both subproblems are infeasible. Thus, B\&C builds a tree with three nodes. 
\end{proof}

Yang et al.~\cite{Yang21:Multivariable} provide more examples of situations where multivariable branching yields dramatic improvements in tree size over single variable branching. They also perform a computational evaluation of a few different strategies for generating multivariable branching constraints. Yang et al.~\cite{Yang20:Learning} explore gradient-boosting for learning to mimic strong branching for multiple variables. 

\subsubsection{Branching on general disjunctions}

Branching constraints can be even more general than multivariable branches. Given any integer vector $\vec{\pi}\in\Z^n$ and any integer $\pi_0\in\Z$ (jointly referred to as a \emph{disjunction}), the constraints $\vec{\pi}^T\vec{x}\le\pi_0$ or $\vec{\pi}^T\vec{x}\ge\pi_0 + 1$ represent a valid partition of the feasible region into subproblems. Owen and Mehrotra~\cite{Owen01:Experimental} ran the first experiments demonstrating that branching on general disjunctions can lead to significantly smaller tree sizes. Subsequent works have posed different heuristics to select disjunctions to branch on~\cite{Fischetti02:Local,Mahajan09:Experiments}.

In practice it is known that additional IP constraints should not have coefficients that are too large. If $C$ is a bound on the magnitude of the coefficient of any disjunction, then $\actions(\tree, Q) = \{-C,\ldots,C\}^{n+1}$, so $\pdim(\{\cost_{\mu,\lambda}\}) = O(n^2\log C)$. Karamanov and Cornu\'{e}jols~\cite{Karamanov11:Branching} conduct a computational evaluation of disjunctions derived from Gomory mixed-integer cuts. In this setting, $\actions(\tree, Q)$ is the set of $m$ or fewer disjunctions corresponding to the $m$ or fewer Gomory mixed-integer cuts derived from the simplex tableau from solving the LP relaxation of $Q$. In this case, $\pdim(\{\cost_{\mu,\lambda}\}) = O(n^2 + n\log m)$.

\subsection{Cutting planes}

The action set can also correspond to cutting planes used to refine the feasible region of the IP at any stage of B\&C. Here, $\actions(\tree, Q)$ is any set of cutting planes derived solely using the path from the root to the IP at $Q$. Examples include the set of Chv\'{a}tal-Gomory (CG) derived from the simplex tableau~\cite{Gomory58:Outline}, and various combinatorial families of cutting planes such as clique cuts, odd-hole cuts, and cover cuts. The set $\actions(\tree, Q)$ can also consist of sequences of cutting planes, representing adding several cutting planes to the IP in waves. For example, the set of all sequences of $w$ CG cuts generated from the simplex tableau for an IP with $m$ constraints has size at most $m^w$ (regardless of whether the LP is resolved after each cut). The number of such cutting planes provided by the LP tableau at any node in the tree is at most $O(m+nw)$ (the original IP has $m$ constraints, and after at most $n$ branches there are an additional $n$ branching constraints and at most $nw$ cutting planes), which means that $|\actions(\tree, Q)|\le O(m+nw)^w$. Thus, $\pdim(\{\cost_{\mu,\lambda}\}) = O(n^2 + nw\log(m+nw))$.

We can also handle arbitrary CG cuts (not just ones from the LP tableau). Balcan et al.~\cite{Balcan21:Sample} proved that given an IP with feasible region $\{\vec{x}\in\Z^n : A\vec{x}\le\vec{b}, \vec{x}\ge 0\}$, even though there are infinitely many CG cut parameters, there are effectively only $O(w2^w\norm{A}_{1, 1} + 2^w\norm{\vec{b}}_1+ nw)^{1+mw}$ distinct sequences of cutting planes that $w$ CG cut parameters can produce. At any node in the B\&C tree, the number of constraints is at most $O(m+nw)$. So, on the domain of IPs with $\norm{A}_{1,1}\le\alpha$ and $\norm{\vec{b}}_1\le\beta$, $|\actions(\tree, Q)|\le O(w2^w\alpha + 2^w\beta+nw)^{1+w\cdot O(m+nw)}$. Thus, $\pdim(\{\cost_{\mu,\lambda}\}) = O(n^2w^3m\log(\alpha+\beta+n))$.

\subsubsection{Experiments on cover cuts for the multiple knapsack problem}

In this section, we demonstrate via experiments that tuning a convex combination of scoring rules to select cuts can lead to dramatically smaller branch-and-cut trees when done in a data-dependent manner. We study the classical NP-hard \emph{multiple knapsack problem}: given a set $N$ of items where each item $i\in N$ has a value $p_i\ge 0$ and a weight $w_i\ge 0$, and a set $K$ of knapsacks where each knapsack $k\in K$ has a capacity $W_k\ge 0$, the goal is to find a feasible packing of the items into the knapsacks of maximum value. We assume, without loss of generality, that the items are labeled in descending order of weight, that is, $w_1\ge w_2\ge\cdots\ge w_{|N|}$. This problem can be formulated as the following binary IP: $$\begin{array}{ll} \text{maximize} & \sum_{i\in N}\sum_{k\in K} p_ix_{k,i}\\
\text{subject to} &  \sum_{i\in N}w_ix_{k, i} \le W_k\hfill\forall\,k\in K\\
&  \sum_{k\in K}x_{k,i}\le 1\hfill\forall\,i\in N\\
& x_{k, i}\in\{0,1\}\hfill\qquad\forall\, i\in N, k\in K
\end{array}$$ A subset $C\subseteq N$ of items is called a \emph{cover} for knapsack $k\in K$ if $\sum_{i\in C}w_i > W_k$. If $C$ is a cover, no feasible solution can have $x_{k, i} = 1$ for all $i\in C$, so $\sum_{i\in C} x_{k, i}\le |C| - 1$ is a valid constraint---called a \emph{cover cut}. When $C$ is minimal (that is, $C\setminus\{i\}$ is not a cover for every $i\in C$), such cover cuts help tighten the knapsack IP by cutting off fractional LP solutions. We generate (a subset of all) cover cuts for each knapsack $k$ as follows: for each $i\in N$, let $j > i$ be minimal such that $C = \{i,i+1,\ldots, j\}$ is a cover for $k$ (if such a $j$ exists). Since $w_i\ge w_j$ for $j > i$, $C$ is a minimal cover, and moreover the \emph{extended cover cut} $\sum_{i=1}^j x_i\le |C| - 1$ is valid and dominates the minimal cover cut $\sum_{i\in C}x_i\le |C| - 1$. Extended cover cuts generated from minimal covers are known to be facet defining for the integer hull under certain natural conditions~\cite{Conforti14:Integer}.

We investigate the relationship between three scoring rules for cutting planes. The first is \emph{efficacy} ($\texttt{E}$), which is the perpendicular distance from the current LP solution to the cutting plane. The second is \emph{parallelism} ($\texttt{P}$), which measures the angle between the objective and the normal vector to the cutting plane. The third is \emph{directed cutoff} ($\texttt{D}$), which is the distance from the current LP solution to the cutting plane along the direction of the line segment connecting the LP solution to the current best incumbent integer solution. More details, including explicit formulas, can be found in~\cite{Balcan21:Sample} and references therein.

\begin{figure}[t]
     \centering
     \begin{subfigure}[b]{0.32\textwidth}
         \centering
         \includegraphics[width=\textwidth]{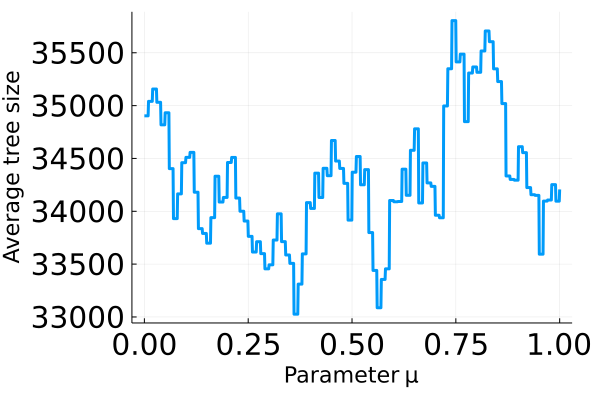}
         \caption{$\mu\cdot\texttt{E} + (1-\mu)\cdot\texttt{P}$}
         \label{fig:chvatal_35i_2k_ep}
     \end{subfigure}
     \hfill
     \begin{subfigure}[b]{0.32\textwidth}
         \centering
         \includegraphics[width=\textwidth]{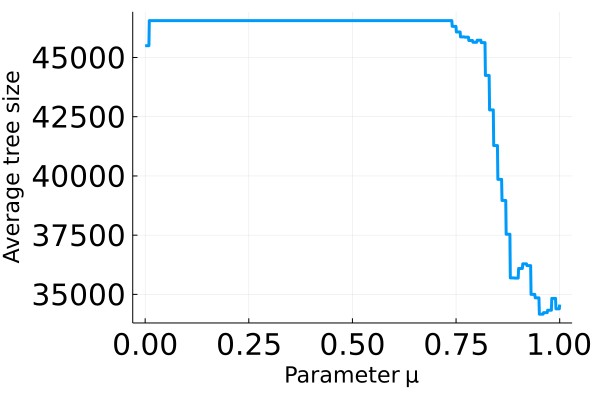}
         \caption{$\mu\cdot\texttt{E} + (1-\mu)\cdot\texttt{D}$}
         \label{fig:chvatal_35i_2k_ed}
     \end{subfigure}
     \hfill
     \begin{subfigure}[b]{0.32\textwidth}
         \centering
         \includegraphics[width=\textwidth]{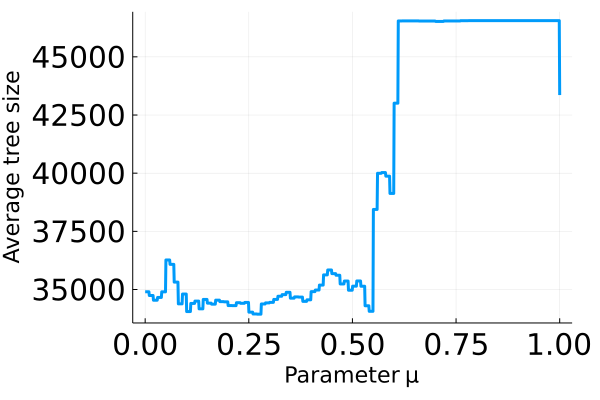}
         \caption{$\mu\cdot\texttt{D} + (1-\mu)\cdot\texttt{P}$}
         \label{fig:chvatal_35i_2k_dp}
     \end{subfigure}
        \caption{Chv\'{a}tal distribution with $35$ items and $2$ knapsacks.}
        \label{fig:chvatal_35i_2k}
\end{figure}

\begin{figure}[t]
     \centering
     \begin{subfigure}[b]{0.32\textwidth}
         \centering
         \includegraphics[width=\textwidth]{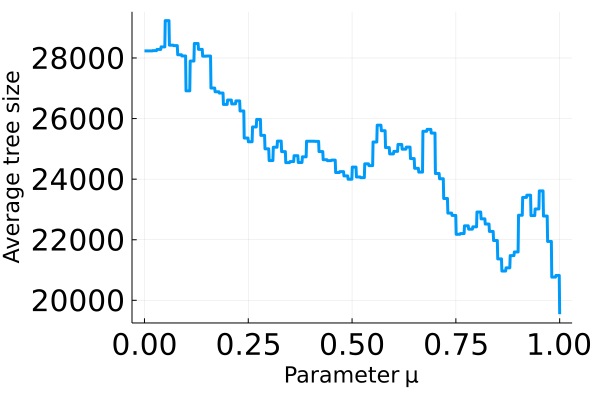}
         \caption{$\mu\cdot\texttt{E} + (1-\mu)\cdot\texttt{P}$}
         \label{fig:chvatal_35i_3k_ep}
     \end{subfigure}
     \hfill
     \begin{subfigure}[b]{0.32\textwidth}
         \centering
         \includegraphics[width=\textwidth]{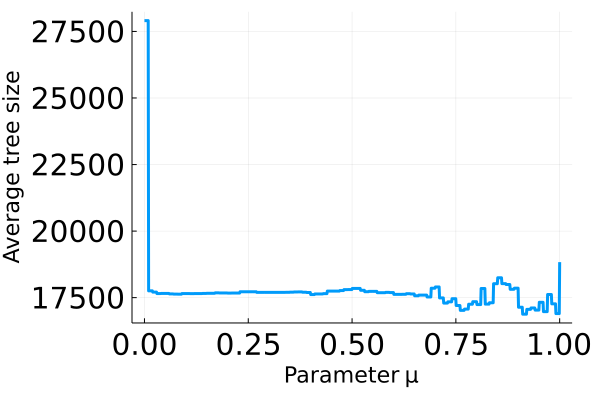}
         \caption{$\mu\cdot\texttt{E} + (1-\mu)\cdot\texttt{D}$}
         \label{fig:chvatal_35i_3k_ed}
     \end{subfigure}
     \hfill
     \begin{subfigure}[b]{0.32\textwidth}
         \centering
         \includegraphics[width=\textwidth]{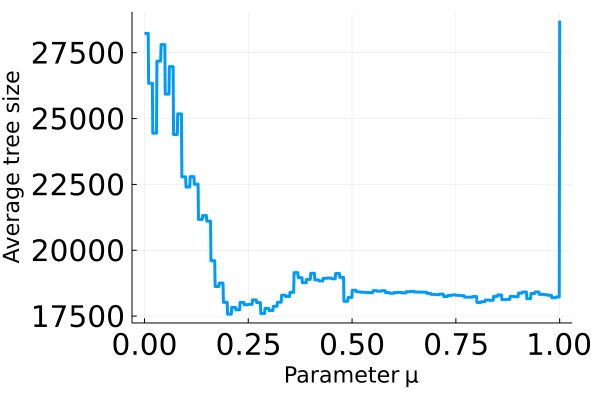}
         \caption{$\mu\cdot\texttt{D} + (1-\mu)\cdot\texttt{P}$}
         \label{fig:chvatal_35i_3k_dp}
     \end{subfigure}
        \caption{Chv\'{a}tal distribution with $35$ items and $3$ knapsacks.}
        \label{fig:chvatal_35i_3k}
\end{figure}

We consider two specific instances of the multiple knapsack problem, which are loosely based on a class of knapsack problems introduced by Chv\'{a}tal that are difficult to solve with vanilla branch-and-bound~\cite{Chvatal80:Hard,Yang21:Multivariable}. In the first, $p_i = w_i$ for all $i\in N$, and $W_k = \lfloor(\sum_{i\in N}w_i)/2|K|\rfloor + (k-1)$ for each $k = 1,\ldots, |K|$. In the second, $p_i = w_{|N| - i + 1}$, so the most valuable item is the lightest and the least valuable item is the heaviest, and $W_k$ is defined as in the first type. We call the first class of problems \emph{Chv\'{a}tal instances} and the second class \emph{reverse Chv\'{a}tal instances}. For a given $N, K$, we generate (reverse) Chv\'{a}tal instances by drawing each weight independently as $w_i = \lfloor z_i\rfloor$, where $z_i\sim \mathcal{N}(50, 2)$, and sorting the items by weight in descending order. 

\begin{figure}[t]
     \centering
     \begin{subfigure}[b]{0.32\textwidth}
         \centering
         \includegraphics[width=\textwidth]{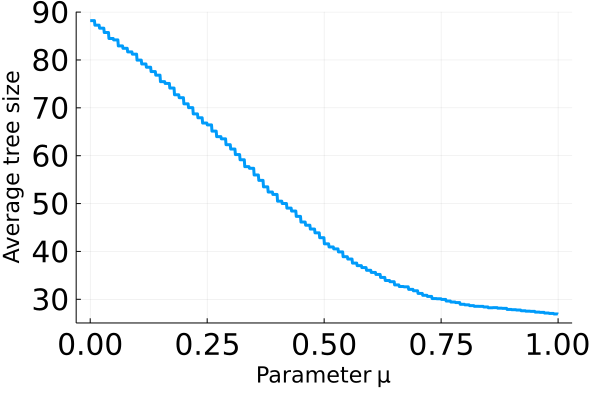}
         \caption{$\mu\cdot\texttt{E} + (1-\mu)\cdot\texttt{P}$}
         \label{fig:reverse_chvatal_100i_10k_ep}
     \end{subfigure}
     \hfill
     \begin{subfigure}[b]{0.32\textwidth}
         \centering
         \includegraphics[width=\textwidth]{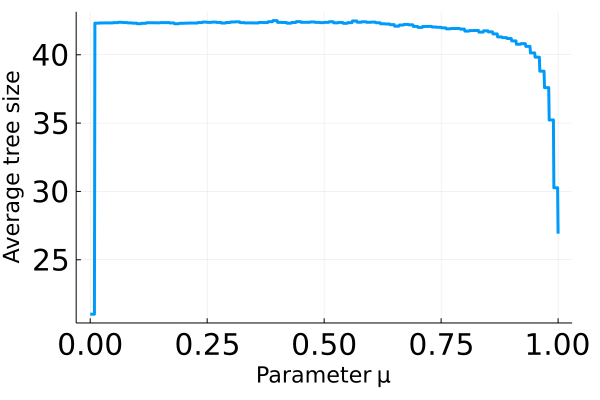}
         \caption{$\mu\cdot\texttt{E} + (1-\mu)\cdot\texttt{D}$}
         \label{fig:reverse_chvatal_100i_10k_ed}
     \end{subfigure}
     \hfill
     \begin{subfigure}[b]{0.32\textwidth}
         \centering
         \includegraphics[width=\textwidth]{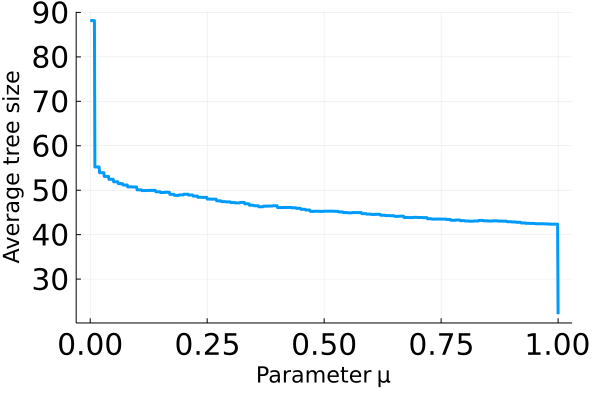}
         \caption{$\mu\cdot\texttt{D} + (1-\mu)\cdot\texttt{P}$}
         \label{fig:reverse_chvatal_100i_10k_dp}
     \end{subfigure}
        \caption{Reverse Chv\'{a}tal distribution with $100$ items and $10$ knapsacks.}
        \label{fig:reverse_chvatal_100i_10k}
\end{figure}

\begin{figure}[t]
     \centering
     \begin{subfigure}[b]{0.32\textwidth}
         \centering
         \includegraphics[width=\textwidth]{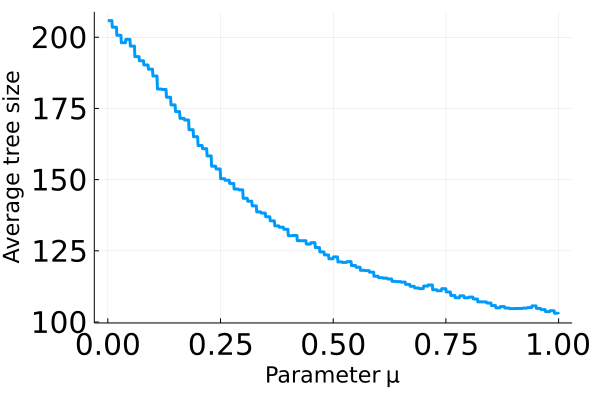}
         \caption{$\mu\cdot\texttt{E} + (1-\mu)\cdot\texttt{P}$}
         \label{fig:reverse_chvatal_100i_15k_ep}
     \end{subfigure}
     \hfill
     \begin{subfigure}[b]{0.32\textwidth}
         \centering
         \includegraphics[width=\textwidth]{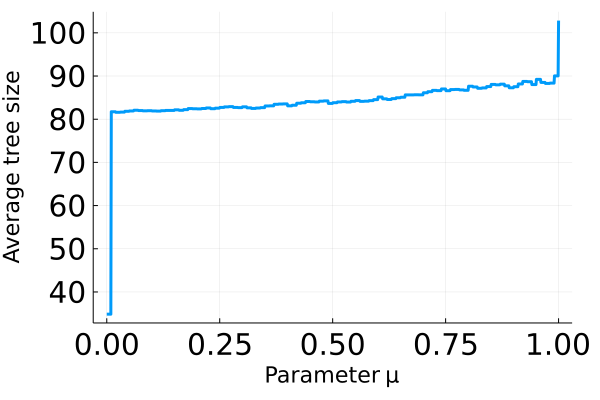}
         \caption{$\mu\cdot\texttt{E} + (1-\mu)\cdot\texttt{D}$}
         \label{fig:reverse_chvatal_100i_15k_ed}
     \end{subfigure}
     \hfill
     \begin{subfigure}[b]{0.32\textwidth}
         \centering
         \includegraphics[width=\textwidth]{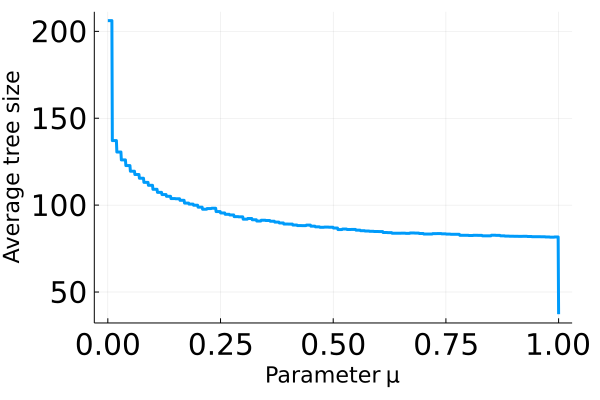}
         \caption{$\mu\cdot\texttt{D} + (1-\mu)\cdot\texttt{P}$}
         \label{fig:reverse_chvatal_100i_15k_dp}
     \end{subfigure}
        \caption{Reverse Chv\'{a}tal distribution with $100$ items and $15$ knapsacks.}
        \label{fig:reverse_chvatal_100i_15k}
\end{figure}

In our experiments, we add (whenever possible) two extended cover cuts obtained in the aforementioned manner at every node of the B\&C tree. The two cuts chosen are the two with the highest score $\mu\cdot\ascore_1 + (1-\mu)\cdot\ascore_2$ among all extended cover cuts that are violated by the current LP optimum, where $\ascore_1,\ascore_2\in\{\texttt{E},\texttt{D},\texttt{P}\}$. Figures~\ref{fig:chvatal_35i_2k}-\ref{fig:reverse_chvatal_100i_15k} display the average tree size over $1000$ samples for different Chv\'{a}tal and reverse Chv\'{a}tal distributions as a function of $\mu$, where the domain $[0,1]$ of $\mu$ is discretized in increments of $0.01$. We ran our experiments using the Python API of CPLEX 12.10 with default cut generation turned off. All other aspects of B\&C (e.g. variable and node selection) are controlled by the default settings of CPLEX. The key takeaway of our plots is that tuning a convex combination of scoring rules can lead to significant savings in B\&C tree size, and that this tuning must be done with the IP distribution in mind. No single parameter produces small trees for all the distributions we considered, and in fact a $\mu$ that minimizes tree size for one distribution can result in the largest trees for another (as in Figures~\ref{fig:chvatal_35i_3k_ed} and \ref{fig:reverse_chvatal_100i_15k_ed}, for example). Furthermore, many of the plots display discernible trends (and in some cases are quite smooth), suggesting that the number of samples required to avoid overfitting in practice can be significantly smaller than our theoretical bounds.

\subsection{Improved bounds for branch-and-cut}

To allow node selection, branching, and cutting-plane selection to be tuned simultaneously, we apply Theorem~\ref{theorem:multiple_actions}. This allows us to bound the pseudo-dimension of the family of functions $\{\cost_{\mu_1, \mu_2, \lambda}\}$, where $\mu_1$ controls branching, $\mu_2$ controls cutting-plane selection, and $\lambda$ controls node selection. Let $\actions_1(\tree, Q)$ denote the set of branching actions available at $Q$, and let $\actions_2(\tree, Q)$ denote the set of cutting planes available at $Q$. Let $b_1, b_2\in\N$ be such that $\actions_1(\tree, Q)\le b_1$ and $\actions_2(\tree, Q)\le b_2$ for all $\tree$ and all $Q\in \tree$. Fix two branching scores $\ascore^1_1, \ascore^1_2$, fix two cutting-plane selection scores $\ascore^2_1, \ascore^2_2$, and fix two node-selection scores $\nscore_1, \nscore_2$.

\begin{theorem}
Let $\cost(Q)$ be any tree-constant cost function, and let $\cost_{\mu_1,\mu_2, \lambda}$ be the cost of the tree built by B\&C using branching score $\mu_1\cdot\ascore^1_1 + (1-\mu_1)\cdot\ascore^1_2$, cutting-plane selection score $\mu_2\cdot\ascore^2_1 + (1-\mu_2)\cdot\ascore^2_2$, and node-selection score $\lambda\cdot\nscore_1 + (1-\lambda)\cdot\nscore_2$. Then, with $\Delta = O(n)$, $\pdim(\{\cost_{\mu_1,\mu_2,\lambda}\}) = O(n^2 + n\log(b_1+b_2))$.
\end{theorem}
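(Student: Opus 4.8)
The plan is to derive this theorem as a direct instantiation of Theorem~\ref{theorem:multiple_actions}. The B\&C algorithm described here takes two distinct types of action at each node: a branching action drawn from $\actions_1(\tree, Q)$ and a cutting-plane action drawn from $\actions_2(\tree, Q)$. At each node we select one action of each type---the branching constraint maximizing $\mu_1\cdot\ascore^1_1 + (1-\mu_1)\cdot\ascore^1_2$ and the cutting plane maximizing $\mu_2\cdot\ascore^2_1 + (1-\mu_2)\cdot\ascore^2_2$---so this is precisely the multiple-action setup of Section~\ref{sec:multiple_actions} with $d = 2 = O(1)$ and $\vec{\mu} = (\mu_1, \mu_2)$. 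The functions $\fathom$ and $\children$ depend jointly on both selected actions, exactly as the multiple-action framework permits.

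First I would read off the parameters of the generic model in this instantiation. The uniform bound on action-set sizes is $b = \max\{b_1, b_2\} \le b_1 + b_2$, so $\log b = O(\log(b_1 + b_2))$. The branching procedure in B\&C splits each node into exactly the two subproblems of the disjunction, so the maximum branching factor is $k = 2$ and hence $\log k = O(1)$. Finally, the depth limit is $\Delta = O(n)$ by the standing assumption for branching.

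Substituting these values into the bound $\pdim(\{\cost_{\vec{\mu}, \lambda}\}) = O(\Delta^2\log k + \Delta\log b)$ furnished by Theorem~\ref{theorem:multiple_actions} gives $\Delta^2\log k = O(n^2)$ and $\Delta\log b = O(n\log(b_1+b_2))$, so the pseudo-dimension is $O(n^2 + n\log(b_1 + b_2))$, which is the claimed bound. The only point requiring care is verifying that this B\&C instantiation genuinely fits the multiple-action framework---namely that each action type is governed by path-wise scores and that the joint dependence of $\fathom$ and $\children$ on both actions is allowed---but both facts follow immediately from the model already set up in this section, so the proof amounts to correctly bookkeeping the values of $d$, $k$, $b$, and $\Delta$.
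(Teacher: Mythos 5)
Your proposal is correct and takes essentially the same route as the paper, which likewise obtains this result by directly applying Theorem~\ref{theorem:multiple_actions} with $d=2$ (one parameter for branching, one for cutting-plane selection), the uniform action-set bound $b \le b_1 + b_2$, branching factor $k=2$ (each branch produces two subproblems, so $\log k = O(1)$), and depth limit $\Delta = O(n)$. Your bookkeeping correctly specializes $O(\Delta^2 \log k + \Delta \log b)$ to $O(n^2 + n\log(b_1+b_2))$.
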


\subsubsection{Comparison to existing bounds}\label{sec:comparison}

Balcan et al.~\cite{Balcan21:Sample} give a pseudo-dimension bound for tree search with a linear dependence on a cap $\kappa$ on the number of nodes allowed in any tree. Their pseudo-dimension bound in our setting is $\pdim(\{\cost_{\mu_1,\mu_2,\lambda}\})=O(\kappa\log\kappa + \kappa\log b_1 + \kappa\log b_2)$. While $\kappa$ is treated as a constant, it can be a prohibitively large quantity. In fact, without explicitly enforcing a limit on the number of nodes expanded by B\&C, Balcan et al.~\cite{Balcan21:Sample} obtain a pseudo-dimension bound of $O(2^n(\log b_1 + \log b_2))$. Balcan et al.~\cite{Balcan18:Learning} use the path-wise property to prove that $\pdim(\{\cost_{\mu}\}) = O(n^2)$ for single-variable branching, but for the case where branching is the only tunable component of B\&C (and node selection is fixed).

\section{Conclusions and future research}

We presented a general model of tree search and proved sample complexity guarantees for this model that improve and generalize upon the recent sample complexity theory for configuring branch-and-cut.
There are many interesting and open directions for future research.
One compelling open question is to obtain pseudo-dimension bounds when action sets are infinite. Balcan et al.~\cite{Balcan21:Sample} alluded to this question in the case of cutting planes, and neither the techniques of their work nor the techniques of the present work can handle, for example, important infinite cutting-plane families such as the class of Gomory mixed-integer cuts, or the infinitely many valid disjunctions that could be branched on. Beyond integer programming, our model of tree search could potentially be applied to completely different problem domains that exhibit tree structure.
Another direction is to extend our results to convex combinations of $\ell > 2$ scoring rules $\mu_1\score_1 + \dots \mu_{\ell}\score_{\ell},$ as Balcan et al.~\cite{Balcan18:Learning} do in the special case of single-variable branching. However, their pseudo-dimension bound grows exponentially in the number of variables $n$ in that special case; developing techniques that lead to a polynomial dependence on $n$ remains a challenging open question.

\subsection*{Acknowledgements}
This material is based on work supported by the National Science Foundation under grants CCF-1733556, CCF-1910321, IIS-1901403, and SES-1919453, the ARO under award W911NF2010081, the Defense Advanced Research Projects Agency under cooperative agreement HR00112020003, a Simons Investigator Award, an AWS Machine Learning Research Award, an Amazon Research Award, a Bloomberg Research Grant, and a Microsoft Research Faculty Fellowship.

\bibliographystyle{plainnat}
\bibliography{dairefs}

\appendix

\section{Analysis of $\cA'$}

\begin{proof}[Proof of Lemma~\ref{lemma:a'}]
Let $\tree$ denote the tree built by $\cA'$. For $i \in [\Delta]$, let $\tree[i]$ denote the restriction of $\tree$ to nodes of depth at most $i$. Let $\ascore_{\mu} = \mu\cdot\ascore_1 + (1-\mu)\cdot\ascore_2$. We prove the lemma by induction on $i$. In particular, we show that for each $i\in[\Delta]$, there are $k^{i(i-1)/2}b^{i}$ subintervals partitioning $[0,1]$ such that $\tree[i]$ is invariant over all $\mu$ within any given subinterval. Since $\tree[\Delta] = \tree$, this implies the lemma statement. The base case of $i=1$ is trivial since $\tree[1]$ consists of only the root.

Now, suppose the statement holds for some $i\in\{1,\ldots,\Delta-1\}$. That is, there are $T\le k^{i(i-1)/2}b^{i}$ disjoint intervals $I_1\cup\cdots\cup I_T = [0,1]$ such that $\tree[i]$ is invariant over all $\mu$ within any given subinterval (our inductive hypothesis). Fix one of these subintervals $I_t$. We subdivide $I_t$ into subintervals such that $\tree[i+1]$ is invariant within each one of these smaller subintervals. Let $Q$ be any leaf of $\tree[i]$, and for $\mu\in I_t$ let $\tree_{\mu}$ denote the state of the tree using $\ascore_{\mu}$ at the point that $Q$ is selected. Since $i < \Delta$, $Q$ is not fathomed at line~\ref{step:depth_fathom}, regardless of $\mu$. Next, since $\actions$ is path-wise, the actions available at $Q$ depend only on the path $\tree_Q$ from the root of $\tree$ to $Q$, which, by the inductive hypothesis, is invariant over all $\mu\in I_t$. That is, $\actions(\tree_{\mu}, Q) = \actions(\tree_Q, Q)$ for all $\mu\in I_t$. Then, $\ascore_{\mu}$ with parameter $\mu$ will select action $A\in\actions(\tree_Q, Q)$ if and only if \begin{align*}A &= \argmax_{A_0\in\actions(\tree_Q, \node)}\mu\cdot\ascore_1(\tree_{\mu}, Q, A_0) + (1-\mu)\cdot\ascore_2(\tree_{\mu}, Q, A_0) \\ &= \argmax_{A_0\in\actions(\tree_Q, \node)}\mu\cdot\ascore_1(\tree_Q, Q, A_0) + (1-\mu)\cdot\ascore_2(\tree_Q, Q, A_0),\end{align*} where the second equality follows from the fact that $\ascore_1$ and $\ascore_2$ are path-wise. Thus, for a fixed $A_0$, $\ascore_{\mu}$ is linear in $\mu$, so for each $A_0$ there is at most one subinterval of $[0,1]$ such that for all $\mu$ in that subinterval, $A_0$ maximizes $\ascore_{\mu}$. Thus, each leaf of $\tree[i]$ contributes at most $b$ subintervals such that for $\mu$ within a given subinterval, the action selected at each leaf of $\tree[i]$ is invariant. $\tree[i]$ consists of at most $k^i$ leaves, so this is a total of at most $k^ib$ subintervals. Now, since the action $A$ selected at each leaf $Q$ of $\tree[i]$ is invariant, the set of children $\children(\tree_{\mu}, Q, A) = \children(\tree_Q, Q, A)$ of $Q$ added to the tree is also invariant, using the fact that $\children$ is path-wise. This shows that within every subinterval, $\tree[i+1]$ is invariant. The total number of subintervals is, by the induction hypothesis, at most $k^{i(i-1)/2}b^{i}\cdot k^ib =  k^{(i+1)i/2}b^{i+1}$, as desired.
\end{proof}

\section{Multiple actions}

Let $\actions_1,\ldots,\actions_d$ be path-wise. The multi-action version of Algorithm~\ref{alg:TS} is given by Algorithm~\ref{alg:mTS}.
\begin{algorithm}[t]
	\caption{Tree search with multiple actions}\label{alg:mTS}
	\begin{algorithmic}[1]
\Require Root node $\node$, depth limit $\Delta$
\State Initialize $\tree = \node$.
\While{$\tree$ contains an unfathomed leaf\label{step:mwhile_begin}}
	\State Select a leaf $Q$ of $\tree$ that maximizes $\nscore(\tree, Q)$.\label{step:mnsp}
	\If {$\texttt{depth}(\node) = \Delta$ or $\fathom(\tree, \node, \texttt{None},\ldots,\texttt{None})$} \label{step:mdepth_fathom}
	    \State Fathom $Q$.
	\Else
	    \State For $i = 1,\ldots, d$, select $A_i\in\actions_i(\tree, \node)$ that maximizes $\ascore_i(\tree,Q, A_i)$.\label{step:maction}
	    \If {$\fathom(\tree, \node, A_1,\ldots,A_d)$} \label{step:maction_fathom}
	        \State Fathom $Q$.
	    \ElsIf {$\children(\tree, \node, A_1,\ldots, A_d) = \emptyset$} \label{step:mno_children}
	        \State Fathom $Q$.
	    \Else 
	        \State Add all nodes in $\children(\tree, \node, A_1,\ldots,A_d)$ to $\tree$ as children of $Q$. \label{step:madd_children}
	    \EndIf
    \EndIf
\EndWhile\label{step:mwhile_end}
\end{algorithmic}
\end{algorithm}
There are two scoring rules $\ascore^i_1$ and $\ascore^i_2$ for each action type $i \in [d]$. Algorithm~\ref{alg:mTS} can then be parameterized by $(\vec{\mu}, \lambda)$, where $\vec{\mu}\in\R^d$ is a vector of parameters controlling each action: the $i$th action is selected to maximize $\mu_i\cdot\ascore^i_1 + (1-\mu_i)\cdot\ascore^i_2$. As before, we assume there are $b, k\in\N$ such that $|\actions_i(\tree, \node)|\le b$ for any $i$ and any $\node\in\tree$, and $|\children(\tree, \node, A_1,\ldots, A_d)|\le k$ for all $Q, A_1,\ldots, A_d$.

Let $\cA'$, as in the single-action setting, be Algorithm~\ref{alg:mTS} with the evaluations of $\fathom$ on line~\ref{step:mdepth_fathom} and line~\ref{step:maction_fathom} suppressed. Then, we may prove a slight generalization of lemma~\ref{lemma:a'}.

\begin{lemma}\label{lemma:ma'}
Let $\ascore^i_1$ and $\ascore^i_2$ be two path-wise action-selection scores, for each $i\in\{1,\ldots,d\}$. Fix the input root node $Q$. There are $T \leq k^{d\Delta(\Delta-1)/2}b^{d\Delta}$ boxes of the form $R_t = I_1\times\cdots\times I_d$ partitioning $[0,1]^d$ where for any box $R_t$, the action-selection scores $\mu_i\cdot\ascore^i_1 + (1-\mu_i)\cdot\ascore^i_2$ results in the same tree built by $\cA'$ for all $\vec{\mu}\in R_{t}$, no matter what node selection policy is used.
\end{lemma}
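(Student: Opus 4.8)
The plan is to mimic the induction-on-depth argument behind Lemma~\ref{lemma:a'}, but carried out over the $d$-dimensional parameter space $[0,1]^d$ rather than over the interval $[0,1]$. Write $\tree$ for the tree built by $\cA'$ and, for $i\in[\Delta]$, let $\tree[i]$ denote the restriction of $\tree$ to nodes of depth at most $i$. I would prove by induction on $i$ the following claim: there are at most $k^{di(i-1)/2}b^{di}$ axis-aligned boxes partitioning $[0,1]^d$ such that $\tree[i]$ is invariant over all $\vec\mu$ in any one box. Since $\tree[\Delta]=\tree$, taking $i=\Delta$ yields the lemma. The base case $i=1$ is immediate, as $\tree[1]$ is just the root.

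For the inductive step, I would fix one box $R_t = I_1\times\cdots\times I_d$ from the depth-$i$ partition and subdivide it so that $\tree[i+1]$ is invariant on each piece. Fix a leaf $Q$ of $\tree[i]$. Since each $\actions_j$ is path-wise and, by the inductive hypothesis, the root-to-$Q$ path $\tree_Q$ is invariant over $R_t$, each action set $\actions_j(\tree_Q, Q)$ is invariant over $R_t$. The key structural observation---and the reason the box form is preserved---is that the selection of the type-$j$ action depends only on $\mu_j$: because $\ascore^j_1$ and $\ascore^j_2$ are path-wise, the quantity $\mu_j\cdot\ascore^j_1(\tree_Q,Q,A_0) + (1-\mu_j)\cdot\ascore^j_2(\tree_Q,Q,A_0)$ is a line in $\mu_j$ for each candidate action $A_0$, so exactly as in Lemma~\ref{lemma:a'} each action maximizes the type-$j$ score on at most one subinterval of $I_j$, partitioning $I_j$ into at most $b$ pieces. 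Crucially, the resulting thresholds for type $j$ lie on the $\mu_j$-axis and are independent of the other coordinates, so refining along each axis separately keeps the partition a box partition.

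Collecting the contributions of all leaves, each of the $d$ axes is cut by the thresholds of at most $k^i$ leaves into at most $k^i b$ intervals, so $R_t$ is subdivided into at most $(k^i b)^d = k^{di}b^d$ sub-boxes. On each sub-box, all $d$ selected actions $A_1,\dots,A_d$ at every leaf $Q$ of $\tree[i]$ are fixed; since $\children$ is path-wise, $\children(\tree,Q,A_1,\dots,A_d) = \children(\tree_Q,Q,A_1,\dots,A_d)$ is also fixed, and because $\fathom$ is suppressed in $\cA'$ nothing else can vary, so $\tree[i+1]$ is invariant on each sub-box. Multiplying by the inductive count gives at most $k^{di(i-1)/2}b^{di}\cdot k^{di}b^d = k^{di(i+1)/2}b^{d(i+1)}$ boxes, completing the induction.

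The main obstacle is not the counting, which is a routine product over the $d$ axes, but rather confirming that the refinement genuinely remains a partition into axis-aligned boxes. This hinges on the separability just noted: because the type-$j$ comparison involves only $\ascore^j_1,\ascore^j_2$ and $\mu_j$, every decision boundary is a hyperplane of the form $\mu_j = c$ orthogonal to one coordinate axis, never a constraint coupling several coordinates. Were the action scores allowed to mix parameters, the regions of invariance would be general polytopes rather than boxes and the clean product bound would fail; verifying that this coupling does not occur is the substantive point of the argument.
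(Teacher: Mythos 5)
Your proposal is correct and takes essentially the same approach as the paper: the same induction on depth with the claim of $k^{di(i-1)/2}b^{di}$ invariance boxes, the same path-wise/linearity argument yielding at most $k^i b$ subintervals per coordinate axis, and the same product count $(k^i b)^d$ per box, giving $k^{di(i+1)/2}b^{d(i+1)}$ after the inductive step. Your explicit observation that the type-$j$ comparison involves only $\mu_j$, so all decision boundaries are hyperplanes orthogonal to a coordinate axis and the refinement remains a box partition, is exactly the separability the paper uses implicitly when it partitions each factor $I_j$ of $R_t = I_1\times\cdots\times I_d$ independently.
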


\begin{proof}
Let $\tree$ denote the tree built by $\cA'$. For $i \in [\Delta]$, let $\tree[i]$ denote the restriction of $\tree$ to nodes of depth at most $i$. Let $\ascore^i_{\mu_i} = \mu_i\cdot\ascore^i_1 + (1-\mu_i)\cdot\ascore^i_2$. We prove the lemma by induction on $i$. In particular, we show that for each $i\in[\Delta]$, there are $k^{di(i-1)/2}b^{di}$ boxes partitioning $[0,1]^d$ such that $\tree[i]$ is invariant over all $\vec{\mu}$ within any given box. Since $\tree[\Delta] = \tree$, this implies the lemma statement. The base case of $i=1$ is trivial since $\tree[1]$ consists of only the root, regardless of $\vec{\mu}\in[0,1]^d$.

Now, suppose the statement holds for some $i\in\{1,\ldots,\Delta-1\}$. That is, there are $T\le k^{di(i-1)/2}b^{di}$ disjoint boxes $R_1\cup\cdots\cup I_R = [0,1]^d$ such that $\tree[i]$ is invariant over all $\vec{\mu}$ within any given boxes (our inductive hypothesis). Fix one of these boxes $R_t$. We subdivide $R_t$ into sub-boxes such that $\tree[i+1]$ is invariant within each one of these smaller boxes. Let $Q$ be any leaf of $\tree[i]$, and for $\vec{\mu}\in R_t$ let $\tree_{\vec{\mu}}$ denote the state of the tree using $\ascore^i_{\mu_i}$ for each $i$ at the point that $Q$ is selected. Since $i < \Delta$, $Q$ is not fathomed at line~\ref{step:depth_fathom}, regardless of $\vec{\mu}$. Next, since $\actions_i$ is path-wise for each $i$, the actions available at $Q$ depend only on the path $\tree_Q$ from the root of $\tree$ to $Q$, which, by the inductive hypothesis, is invariant over all $\vec{\mu}\in R_t$. That is, for all $i$ $\actions_i(\tree_{\mu}, Q) = \actions_i(\tree_Q, Q)$ for all $\vec{\mu}\in R_t$. Then, $\ascore^i_{\mu_i}$ will select action $A_i\in\actions_i(\tree_Q, Q)$ if and only if \begin{align*}A_i &= \argmax_{A_0\in\actions_i(\tree_Q, \node)}\mu\cdot\ascore^i_1(\tree_{\vec{\mu}}, Q, A_0) + (1-\mu_i)\cdot\ascore^i_2(\tree_{\vec{\mu}}, Q, A_0) \\ &= \argmax_{A_0\in\actions_i(\tree_Q, \node)}\mu_i\cdot\ascore^i_1(\tree_Q, Q, A_0) + (1-\mu_i)\cdot\ascore^i_2(\tree_Q, Q, A_0),\end{align*} where the second equality follows from the fact that $\ascore^i_1$ and $\ascore^i_2$ are path-wise. Thus, for a fixed $A_0$, $\ascore^i_{\mu_i}$ is linear in $\mu_i$, so for each $A_0$ there is at most one subinterval of $[0,1]$ such that for all $\mu_i$ in that subinterval, $A_0$ maximizes $\ascore^i_{\mu_i}$. Thus, each leaf of $\tree[i]$ contributes at most $b$ subintervals such that for $\mu_i$ within a given subinterval, the action of type $i$ selected at each leaf of $\tree[i]$ is invariant. $\tree[i]$ consists of at most $k^i$ leaves, so this is a total of at most $k^ib$ subintervals. Writing $R_t = I_1\times\cdots I_d$, we have established that for each $i$, there are $k^ib$ subintervals partitioning $I_i$ into subintervals such that as $\mu_i$ varies over each subinterval, the action of type $i$ selected at every leaf of $\tree[i]$ is invariant. These subintervals partition $R_t$ into at most $(k^ib)^d$ boxes. As before, since the actions selected at each leaf $Q$ of $\tree[i]$ are invariant, the set of children $\children(\tree_{\vec{\mu}}, Q, A_1,\ldots, A_d) = \children(\tree_Q, Q, A_1,\ldots, A_d)$ of $Q$ added to the tree is also invariant, using the fact that $\children$ is path-wise. Therefore, within every sub-box of $R_t$, $\tree[i+1]$ is invariant. The total number of boxes over each possible $R_t$ is, by the induction hypothesis, at most $k^{di(i-1)/2}b^{di}\cdot k^{di}b^d =  k^{d(i+1)i/2}b^{d(i+1)}$.
\end{proof}

The proof of Lemma~\ref{lemma:rooted} is identical in the multi-action setting. The proof of Lemma~\ref{lemma:main} is also identical: here, we fix a box $R$ in the partition established in Lemma~\ref{lemma:ma'}, and get an identical partition of $R\times [0,1]$ such that the behavior of Algorithm~\ref{alg:mTS} is invariant as $\lambda$ varies in each subinterval of $[0,1]$. The number of boxes in the final partition of $[0,1]^{d+1}$ is $k^{d\Delta(\Delta-1)/2}b^{d\Delta}\cdot k^{5\Delta}\le k^{d\Delta(9+\Delta)}b^{d\Delta}.$ Our main pseudo-dimension bound for the multi-action setting follows from the same argument that exploits the framework of Balcan et al.~\cite{Balcan21:How}.

\begin{theorem}\label{theorem:multiple_actions_d}
Let $\cost(Q)$ be any tree-constant cost function, and let $\cost_{\vec{\mu},\lambda}(Q)$ be the cost of the tree built by Algorithm~\ref{alg:TS} on input root node $Q$ using action-selection scores parameterized by $\vec{\mu}\in\R^d$, where $d = O(1)$, and node-selection score parameterized by $\lambda$. Then, $\pdim(\{\cost_{\vec{\mu},\lambda}\}) = O(d\Delta^2\log k + d\Delta\log b)$.
\end{theorem}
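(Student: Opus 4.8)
The plan is to reduce Theorem~\ref{theorem:multiple_actions_d} to an application of the piecewise-structure machinery already assembled for the multi-action setting, exactly mirroring the single-action proof of Theorem~\ref{theorem:main}. First I would invoke the multi-action analogues of the three structural lemmas: Lemma~\ref{lemma:ma'} (the generalization of Lemma~\ref{lemma:a'}), together with the remarks immediately following it stating that Lemma~\ref{lemma:rooted} and Lemma~\ref{lemma:main} carry over verbatim. The combined conclusion, as recorded in the excerpt, is that for any fixed root node $Q$ the parameter space $[0,1]^{d+1}$ can be partitioned into at most $T = k^{d\Delta(9+\Delta)}b^{d\Delta}$ axis-aligned boxes such that $\cost_{\vec{\mu},\lambda}(Q)$ is constant on each box.

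Next I would translate this piecewise-constant structure into a pseudo-dimension bound via the framework of Balcan et al.~\cite{Balcan21:How}, just as in the proof of Theorem~\ref{theorem:main}. The $T$ boxes in $[0,1]^{d+1}$ are cut out by at most $T$ axis-aligned hyperplane thresholds in each of the $d+1$ coordinates, and hence can be identified by at most $(d+1)T$ linear separators in $\R^{d+1}$. The VC dimension of linear (in fact halfspace) boundaries in $\R^{d+1}$ is $O(d)$, and the pseudo-dimension of the constant functions on each piece is $O(1)$. Feeding the number of boundary functions $(d+1)T$, their VC dimension $O(d)$, and the per-piece complexity $O(1)$ into the main theorem of Balcan et al.~\cite{Balcan21:How} yields a pseudo-dimension bound of order $\log\bigl((d+1)T\bigr)$ up to the dependence on $d$ coming from the boundary VC dimension. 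Taking logarithms gives $\log T = O(d\Delta^2\log k + d\Delta\log b)$, which dominates the lower-order $\log(d+1)$ and $d$ terms since $d = O(1)$, delivering the claimed bound $\pdim(\{\cost_{\vec{\mu},\lambda}\}) = O(d\Delta^2\log k + d\Delta\log b)$.

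Most of this proof is genuinely routine given the lemmas, so the only real subtlety is bookkeeping in how the boundary functions are counted and fed into the Balcan et al.~\cite{Balcan21:How} framework. The main obstacle I anticipate is making sure the dependence on $d$ is handled cleanly: the boxes in $[0,1]^{d+1}$ are defined by thresholds in $d+1$ distinct coordinate directions, so I must confirm that the number of boundary classifiers is polynomial in $T$ and $d$ (not exponential) and that the VC dimension of the separator class grows only linearly in $d$. Once those two quantities are pinned down, the final $\log$ absorbs everything and the stated bound follows. I would present the argument tersely, emphasizing that it is a direct instantiation of the single-action proof with $d+1$ parameters in place of $2$.
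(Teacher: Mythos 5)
Your proposal is correct and follows essentially the same route as the paper: the paper likewise combines Lemma~\ref{lemma:ma'} with the observation that Lemmas~\ref{lemma:rooted} and~\ref{lemma:main} carry over verbatim to obtain the partition of $[0,1]^{d+1}$ into at most $k^{d\Delta(9+\Delta)}b^{d\Delta}$ boxes, and then plugs this piecewise-constant structure into the framework of Balcan et al.~\cite{Balcan21:How} exactly as in the proof of Theorem~\ref{theorem:main}. Your bookkeeping of the boundary classifiers ($(d+1)T$ axis-aligned separators, boundary VC dimension $O(d)$, all absorbed by the logarithm since $d = O(1)$) is consistent with, and if anything slightly more explicit than, the paper's treatment.
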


When $d = O(1)$ we get the same pseudo-dimension bound as in the single-action setting: $\pdim(\{\cost_{\vec{\mu},\lambda}\}) = O(\Delta^2\log k + \Delta\log b)$, which is the statement of Theorem~\ref{theorem:multiple_actions}.

\end{document}